\theoremstyle{plain}
\newtheorem{theorem}{Theorem}[section]
\newtheorem{lemma}[theorem]{Lemma}
\theoremstyle{definition}
\newtheorem{assumption}[theorem]{Assumption}
\theoremstyle{remark}
\def\R{\mathbb{R}}
\def\E{\mathbb{E}}
\newcommand{\sign}{\mbox{sign}}
\newcommand{\<}{\left\langle}
\renewcommand{\>}{\right\rangle}
\begin{document}

%

%
\runningauthor{Wei Shen, Zhang Yaxiang, Minhui Huang, Mengfan Xu, Jiawei Zhang, Cong Shen}

\twocolumn[

\aistatstitle{MLorc: Momentum Low-rank Compression for Memory Efficient Large Language Model Adaptation}

\aistatsauthor{ Wei Shen\footnotemark \addtocounter{footnote}{-1} \And Zhang Yaxiang  \footnotemark \footnotemark  \addtocounter{footnote}{-1} \And  Minhui Huang}

\aistatsaddress{ University of Virginia \And  National University of Singapore \And Meta }

\aistatsauthor{
Mengfan Xu \And Jiawei Zhang \And Cong Shen }

\aistatsaddress{
University of Massachusetts Amherst \And University of Wisconsin-Madison \And University of Virginia} 
]

\footnotetext{Equal contribution. The first two authors are listed in alphabetical order.}
\addtocounter{footnote}{1}
\footnotetext{Corresponding author. Address correspondance at \texttt{e1353410@u.nus.edu}.}

\begin{abstract}
With the increasing size of large language models (LLMs), full-parameter fine-tuning imposes substantial memory demands. To alleviate this, we propose a novel memory-efficient training paradigm called \textbf{M}omentum \textbf{Lo}w-\textbf{r}ank \textbf{c}ompression (MLorc).
The key idea of MLorc is to compress and reconstruct the momentum of matrix parameters during training to reduce memory consumption. Compared to LoRA, MLorc avoids enforcing a fixed-rank constraint on weight update matrices and thus enables full-parameter learning. Compared to GaLore, MLorc directly compress the momentum rather than gradients, thereby better preserving the training dynamics of full-parameter fine-tuning. We provide a theoretical guarantee for its convergence under mild assumptions. Empirically, MLorc consistently outperforms other memory-efficient training methods, matches or even exceeds the performance of full fine-tuning at small ranks (e.g., \(r=4\)), and generalizes well across different optimizers, all while not compromising time or memory efficiency. 
\end{abstract}

\section{INTRODUCTION}

Large Language Models (LLMs) have demonstrated strong generalization capabilities on downstream tasks after fine-tuning \citep{liu2019roberta,raffel2020exploring,li2021prefix}. However, full-parameter fine-tuning is prohibitively expensive in terms of GPU memory. In addition to storing billions of model parameters and activation values, training also requires memory for gradients and various optimizer states (e.g., first- and second-order momentum terms in Adam). Without memory-saving techniques, standard AdamW consumes approximately three times more memory for gradients and optimizer states than for storing model parameters alone.

One promising approach to reduce this memory overhead is to design memory-efficient optimization paradigms tailored for fine-tuning, where the objective is to adapt the model to specific tasks. LoRA (Low-Rank Adaptation) \citep{hu2022lora} is one of the most widely adopted parameter-efficient fine-tuning (PEFT) methods: it freezes the original model weights and introduces trainable, low-rank updates. However, LoRA inherently limits the space of possible weight updates due to its low-rank constraint, and its reparameterization can significantly alter training dynamics. Prior studies have shown that LoRA may underperform full-parameter fine-tuning on certain tasks \citep{biderman2024lora,xia2024chain} and exhibit distinct update patterns \citep{liu2024dora}.

Recently, GaLore (Gradient Low-Rank Projection) \citep{zhao2024galore}, another memory-efficient optimization approach, has garnered attention. GaLore projects gradients and optimizer states (e.g., momentum) into a low-dimensional subspace for storage and uses the same projector to reconstruct optimizer states used to update weight. The projectors are periodically updated through Singular Value Decomposition (SVD) on stochastic gradient matrices. GaLore claims to overcome the limitations of low-rank methods like LoRA by canceling low-rank factorization and improving training dynamics. Nevertheless, both prior research \citep{luo2024badam} and our experiments reveal that GaLore usually underperforms, even compared to LoRA. We attribute GaLore's underperformance to its suboptimal training dynamics, that is, the reconstruction in GaLore can break the structure of the momentum, due to the instability of singular vectors of the stochastic gradient. We will analyze this in detail in \Cref{section: method}.

To address these challenges, we propose a new memory-efficient training paradigm, \textbf{M}omentum \textbf{Lo}w-\textbf{r}ank \textbf{c}ompression (MLorc). 
Unlike GaLore, MLorc directly compresses and reconstructs momentum instead of gradients with Randomized SVD (RSVD) \citep{halko2011finding} and then uses these compressed momentum to run some benchmark optimizers like Adam \citep{diederik2014adam} or Lion \citep{chen2023symbolic}, thereby maintaining closer alignment with the training dynamics of full-parameter fine-tuning. The motivation of designing MLorc stems from our empirical observation that, during LLM fine-tuning, the momentum of matrix parameters often exhibits an approximately low-rank structure, implying that compressing the momentum does not result in significant information loss. Moreover, directly compressing the momentum can avoid the affect of unstable stochastic gradient.

Our main contributions can be summarized as follows:
\begin{itemize}[noitemsep,topsep=0pt,leftmargin = *]
    \item We propose a new memory-efficient training paradigm called \textbf{M}omentum \textbf{Lo}w-\textbf{r}ank \textbf{c}ompression (MLorc). The key idea of MLorc is to compress and reconstruct the momentum of matrix parameters during training to reduce memory consumption. The main motivation of MLorc is from our empirical observation of approximately low-rank structure of matrix parameters' momentum.
    \item We provide a theoretical convergence guarantee for MLorc with the Lion optimizer \citep{chen2023symbolic}, matching the original Lion's sample complexity \citep{dong2024convergence} under mild assumptions.
    \item We validate the effectiveness of MLorc through extensive experiments across diverse models, datasets, and optimizers. Our results demonstrate that MLorc outperforms LoRA \citep{hu2022lora}, GaLore \citep{zhao2024galore}, and LDAdamW \citep{robert2024ldadam} on math and coding tasks with LLaMA2-7B, and achieves the highest average performance on GLUE tasks with RoBERTa-Base \citep{liu2019roberta}. At the same time, MLorc maintains competitive runtime and memory efficiency compared to other memory-saving methods.
\end{itemize}

\section{RELATED WORK}\label{section: related work}
\textbf{Low-rank adaptation.} Low-rank adaptation methods, such as LoRA \citep{hu2022lora}, have been proposed to enhance the memory efficiency of fine-tuning large language models. LoRA introduces trainable low-rank matrices into each layer of a pre-trained model, significantly reducing the number of trainable parameters while maintaining performance comparable to full fine-tuning. Inspired by LoRA, Flora \citep{hao2024flora} periodically resamples random projection matrices during training to compress the gradients, aiming to achieve higher-rank updates over time while maintaining same level memory consumption. There are also other variants of LoRA designed for improving performance and other purposes \citep{meng2024pissa,kalajdzievski2023rank,dettmers2023qlora,hayou2024lora+,zhang2023adalora,li2024mixlora,zi2023delta,wang2023multilora,li2024mixlora,zhang2023lora}. 

In contrast, GaLore \citep{zhao2024galore} is a memory-efficient fine-tuning method that reduces the storage cost of gradients and optimizer states by projecting them into a dynamically learned low-rank subspace. 
There are also other variants of GaLore designed for improving time efficiency and further reducing memory footprint \citep{zhang2024q,rajabi2025subtrack,yang2025sparse}. However, a recent study \citep{he2024subspace} shows that GaLore does not always converge to the optimal solution under standard assumptions. \cite{he2024subspace} proposed GoLore, a variant of GaLore that utilized random low-rank projection instead of the greedy one used in GaLore. Recent studies have also introduced GaLore-inspired variants. For example, Fira \citep{chen2024fira} improves performance by combining the exact gradient with the GaLore update. LDAdam \citep{robert2024ldadam} incorporates a projection-aware update rule for optimizer states together with a generalized error-feedback mechanism, explicitly addressing the compression of both gradients and optimizer states.

\textbf{Memory-efficient optimization.} There are also other techniques to reduce memory footprint during training, including gradient checkpointing \citep{chen2016training}, quantization \citep{dettmers8,li2023memory} and other memory-efficient optimization methods (AdaLomo \citep{lv2023adalomo}, MeZO \citep{malladi2023fine}, etc). These methods address different aspects of the memory bottleneck. They are orthogonal to our methods and some of them can be combined with MLorc to further reduce the memory footprint.

\textbf{Matrix compression.} Matrix compression techniques, particularly those based on Singular Value Decomposition (SVD), play an important role in model compression \citep{wang2025dobi,liu2024eora} and reducing memory footprint \citep{zhao2024galore} in model training. Randomized SVD (RSVD) \citep{halko2011finding} is an efficient variant of SVD. SVD decomposes a matrix into low-rank components that preserve most of its information, while RSVD accelerates this process by approximating the dominant singular subspace using random projections. These methods enable compact representation of gradients and optimizer states, thus laying the foundation of our method.

\section{PRELIMINARIES AND MLORC}\label{section: method}

In this section, we introduce our main method. We begin by presenting the preliminaries on which our method is built, including LoRA and GaLore. We then formally elaborate on MLorc, covering the algorithmic framework and steps, memory analysis, and convergence analysis.

\subsection{Preliminaries}

\textbf{Notation.}
For a matrix $A\in \R^{m\times n}$, we denote its Frobenius norm  as $\|A\|_F$, denote its entrywise $l_1$ norm as $\|A\|_{1,1}\triangleq \sum_{i=1}^m\sum_{j=1}^n |A_{ij}|$. Given a batch sample $B=\{\xi^i\}_{i=1}^b$, we denote $\nabla f(W; B)=\frac{1}{b}\sum_{i=1}^b \nabla f(W; \xi_i)$.

\subsubsection{LoRA}
LoRA \citep{hu2022lora} is a parameter-efficient fine-tuning technique designed for adapting large pre-trained models to downstream tasks. Instead of updating the full model weights, LoRA freezes the original parameters and injects trainable low-rank matrices into specific layers (typically attention or feedforward layers). This significantly reduces the number of trainable parameters and memory requirements during fine-tuning. Initial weight of the model \(W_0 \in R^{m \times n}\) is frozen, and weight update is achieved by updating two low rank matrices: \(B \in R^{m \times r}\) and \(A \in R^{r \times n}\), typically \( r \ll m,n\), as illustrated in the following formula:
\begin{align}
    W = W_{0} + BA.
\end{align}

Despite its memory efficiency, LoRA has several limitations. First, the imposed low-rank constraint can restrict the expressiveness of weight updates, potentially limiting performance on tasks that require more complex adaptations. Second, LoRA introduces a reparameterization of the weight update process, which alters the training dynamics and can lead to suboptimal convergence in some scenarios \citep{zhao2024galore,meng2024pissa}. Empirical studies have shown that LoRA can underperform full fine-tuning on certain tasks \citep{biderman2024lora,xia2024chain}.

\subsubsection{GaLore}

GaLore \citep{zhao2024galore} is a recent memory-efficient training paradigm designed to reduce the memory footprint of optimizer states and gradients during fine-tuning of large language models. Unlike LoRA, which freezes the model and injects low-rank trainable adapters into the weight matrices, GaLore applies a low-rank projection directly to the gradients and optimizer states. Specifically, it performs periodic SVD on the gradients to identify a low-rank subspace, into which the optimizer states (e.g., momentum, variance) are projected. This strategy allows GaLore to maintain full-parameter weight updates while significantly compressing the memory required for training, aiming to preserve training dynamics more faithfully than LoRA’s reparameterized updates. 

However, there is still room for improvement in GaLore. Although GaLore does not constrain the weight updates themselves to be low-rank, it relies on fixed (over a certain number of steps) low-rank projections, which may still limit its ability to fully capture dynamic gradient information. 

To be specific, in step \(t\), GaLore (on Adam) first gets projector \(P_t\): it is updated every \(T\) steps using the singular vector of gradient \(G_t\); otherwise \(P_t\) is equal to \(P_{t-1}\). Subsequently, GaLore projects \(G_t\): \(R_t = P_t^{T}G_t\) and first/second order momentum \(M_t, V_t\) is constructed by exponential average of \(R_t\), just like original Adam. Finally, low-rank update \(N_t=\frac{M_t}{\sqrt{V_t}+\epsilon}\) and GaLore uses \(P_t\) to project back \(N_t\).  To ensure GaLore's training dynamics align with those of full-parameter training, it implicitly depends on two key assumptions: (1) gradients exhibit a low-rank structure, and (2) the eigenspace of \(N_t\) can be properly recovered by pre-defined projectors. 

While the first assumption is well-supported by prior studies \citep{zhaozero,cosson2023low}, the second is questionable in the context of mini-batch training. Infrequent projector updates can result in misaligned projections and reconstructions, and even with expensive high-frequency updates, a critical limitation remains: \textbf{there exists no well-defined projector for back-projection of \(N_t\)}, since momentum is an accumulation (i.e., weighted average) of mini-batch gradients across different steps. For example, with default \( \beta_2 =0.999\), gradients of 100 steps earlier still have comparable weight with current gradients in second-order momentum; hence, $M_t$ and $V_t$'s eigen space is very different from $g_{t-\tau}$'s for any \( \tau\). Also, \(N_t\) is a non-linear transformation of $M_t$ and $V_t$, which makes the preservation of eigenspace impossible. Consequently, \(N_t\)'s eigenspace cannot be recovered from any single-step gradient's eigenspace. This motivates us to shift focus from gradient compression to momentum compression -- \textbf{directly compressing and reconstructing momentum rather than gradients}.

\subsection{MLorc}\label{section: mlorc}

\subsubsection{Algorithm and Implementation}

\begin{figure}[htbp]
  \centering
    \includegraphics[width=22em]{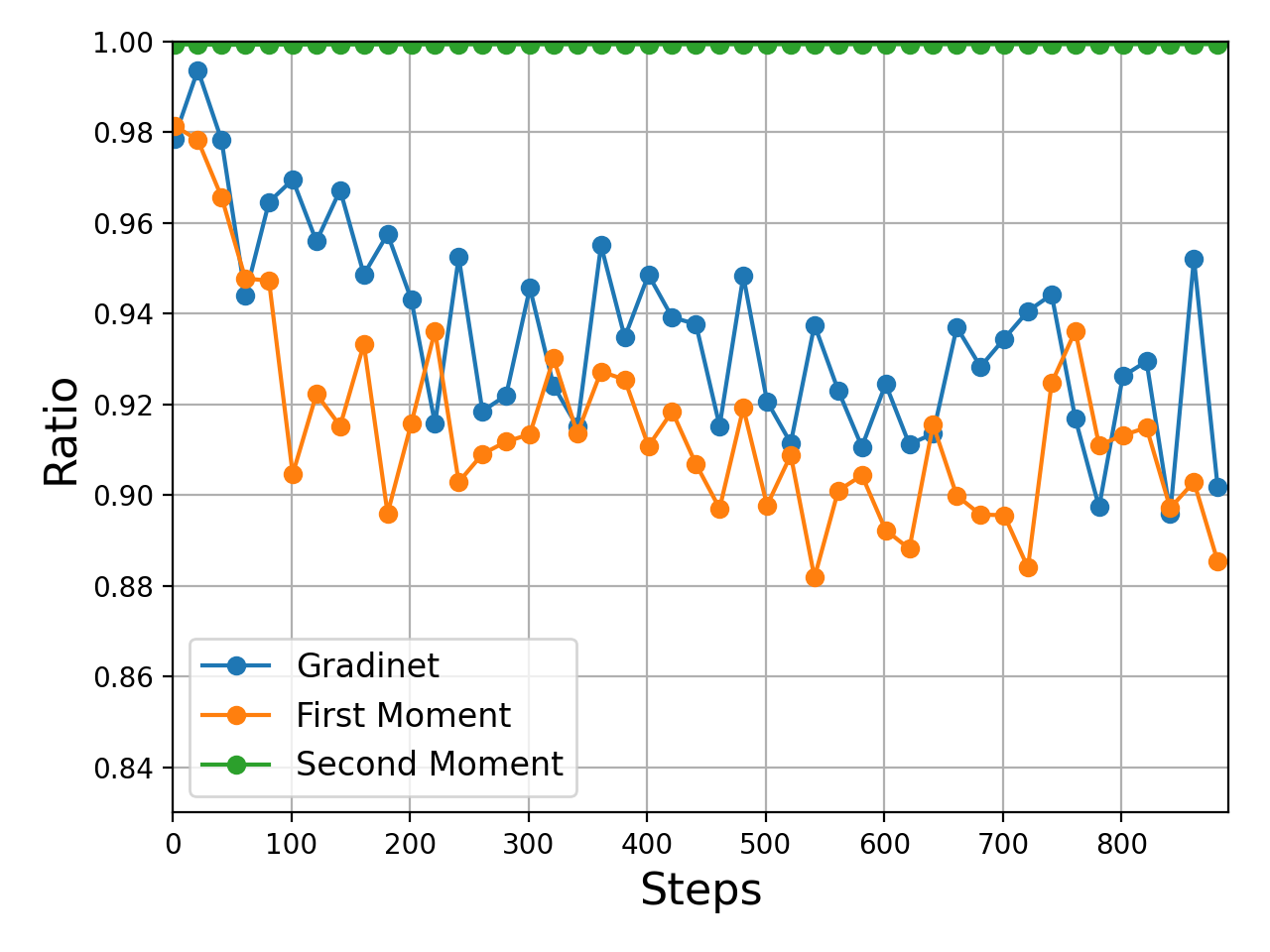}
  \caption{Ratio of top-8 singular values to total singular values for gradient, first moment, and second moment during AdamW finetuning of RoBERTa-base on the STSB dataset.}
  \label{fig: moment stsb}
\end{figure}

To enable compression at the momentum level, we first investigate whether momentum exhibit low-rank structure. We analyze various components involved in optimization by examining the concentration of singular values in gradients and momenta. For a matrix $A\in R^{m\times n}$ (suppose $m\geq n\geq 8$) with singular values $\sigma_1\geq \sigma_2\geq \cdots \geq \sigma_n$, we calculate its ratio of top-8 singular values as $\frac{\sum_{i=1}^8 \sigma_i}{\sum_{i=1}^n \sigma_i}$. We use this ratio to quantify the concentration of singular values (i.e., the 'low-rankness') of a matrix, where a larger value reflects a stronger low-rank structure.  We compute its ratio for the gradient, first moment, and second moment matrices during AdamW fine-tuning of RoBERTa-base on the STSB dataset and get Figure~\ref{fig: moment stsb}.  As illustrated in Figure~\ref{fig: moment stsb}, the first-order momentum shows a spectral pattern similar to that of the gradients, while the second-order momentum demonstrates an even stronger low-rank structure. More experimental evidence can be found in \Cref{app: low rank}. Motivated by these empirical observations, along with our earlier analysis of GaLore’s training dynamics, we propose \textbf{M}omentum \textbf{Lo}w-\textbf{r}ank \textbf{c}ompression (MLorc), a new memory-efficient training paradigm for large-scale model fine-tuning.

The core idea of MLorc is to efficiently and accurately compress momentum for storage and reconstruct momentum to update weight, and we chose RSVD \citep{halko2011finding} to do this. A detailed introduction to RSVD is deferred to \Cref{app: rsvd}. Here, we highlight a key property: the time complexity of RSVD is \(O(mnr)\), which is on the same order as the projection and back-projection operations. Notably, MLorc can be applied to any optimizer (e.g, Adam, Lion) with momentum. Taking AdamW as an example: at each optimization step, we first reconstruct the first and second order momentum  \(\tilde{m}_{t-1}, \tilde{v}_{t-1}\)  from the compressed optimizer state:  \( \tilde{m}_{t-1} = m_{u,t-1} m_{s,t-1} m_{v,t-1}^\top, \quad \tilde{v}_{t-1} = v_{u,t-1} v_{s,t-1} v_{v,t-1}^\top\),  update them using the current gradient:  \(m_t = \beta_1 \tilde{m}_{t-1} + (1 - \beta_1) g_t, \quad v_t = \beta_2 \tilde{v}_{t-1} + (1 - \beta_2) g_t^2\)  , and then compress the updated momenta using RSVD:  $(m_{u,t}, m_{s,t}, m_{v,t}) = \text{RSVD}(m_t)$, $(v_{u,t}, v_{s,t}, v_{v,t}) = \text{RSVD}(v_t)$.  Finally, we use these updated momenta to perform the usual parameter update.

There is also a special consideration for second-order momentum, which must remain entry-wise non-negative. A straightforward approach is to apply an entry-wise \(ReLU\) to the reconstructed second-order momentum  \(\tilde{v}_{t-1}\).  However, this introduces zeros in the reconstructed values, and since \( \beta_{2}\) is typically set very close to 1, these zeros can result in extremely small values in the updated second-order momentum. This can destabilize training and degrade model performance. 
To address this, we add a small constant entry-wise to the zero values in $ReLU(\tilde{v}_{t-1})$. Given that parameter groups often have different scales, this constant should be chosen adaptively. In practice, we set it to the absolute mean of the negative part of the reconstructed momentum, which is usually much smaller than the positive part.
For example, we first calculate the absolute mean of the negative part of the reconstructed momentum $\zeta(\tilde{v}_{t-1}) = \frac{1}{|\{ i : (\tilde{v}_{t-1})_i < 0 \}|} \sum_{i :(\tilde{v}_{t-1})_i < 0} |(\tilde{v}_{t-1})_i|$. Then, we update the $\tilde{v}_{t-1}$ according to the following formula:
\begin{align}\label{eq: update v}
    \tilde{v}_{t-1} \gets ReLU(\tilde{v}_{t-1})+ \zeta (\tilde{v}_{t-1})\cdot \mathbf{1}_{\{\tilde{v}_{t-1} < 0\}},
\end{align}
where $\mathbf{1}_{\{\tilde{v}_{t-1} < 0\}}$ is the indicator vector of the negative entries of $\tilde{v}_{t-1}$.
This modification is different from adding \( \epsilon\) on the square root of second-order momentum: \(0\)s here come from the error introduced by momentum compression rather than the small magnitude of the corresponding gradient element.

\begin{algorithm}
\caption{MLorc-AdamW}
\begin{algorithmic}[1]
\State\textbf{Input}: Initial weights \( W_1 \), learning rate \( \alpha \), betas \( \beta_1, \beta_2 \), weight decay rate $\lambda$, constant \( \varepsilon \), target rank \( r \), oversampling parameter \( p \), batch size $b$.
\State Initialize RSVD factors: \( (m_{u,0}, m_{s,0}, m_{v,0}) \gets 0 \), \( (v_{u,0}, v_{s,0}, v_{v,0}) \gets 0 \), \( t \gets 1 \)
\While{not converged}
    \State Sample a mini‑batch $B_t=\{\xi_t^i\}_{i=1}^b$ uniformly at random
    \State Compute gradient: \( g_t \gets \nabla f(W_{t}; B_t)\)

    \State \( \tilde{m}_{t-1} \gets m_{u,t-1} m_{s,t-1} m_{v,t-1}^\top \)
    \State \( \tilde{v}_{t-1} \gets v_{u,t-1} v_{s,t-1} v_{v,t-1}^\top \)

    \State Update $\tilde{v}_{t-1}$ according to \eqref{eq: update v}
    
    \State \( m_t \gets \beta_1 \tilde{m}_{t-1} + (1 - \beta_1) g_t \)
    \State \( v_t \gets \beta_2 \tilde{v}_{t-1} + (1 - \beta_2) g_t^2 \)

    \State \( (m_{u,t}, m_{s,t}, m_{v,t}) \gets \text{RSVD}(m_t, r, p) \)
    \State \( (v_{u,t}, v_{s,t}, v_{v,t}) \gets \text{RSVD}(v_t, r, p) \)

    \State \( \hat{m}_t \gets \frac{m_t}{1 - \beta_1^t} \)
    \State \( \hat{v}_t \gets \frac{v_t}{1 - \beta_2^t} \)

    \State \( W_{t+1} \gets W_{t} - \alpha( \frac{\hat{m}_t}{\sqrt{\hat{v}_t} + \varepsilon} +\lambda W_{t})\)
    \State \( t \gets t + 1 \)
\EndWhile
\State \Return \( W_t \)
\end{algorithmic}
\label{alg: adam}
\end{algorithm}

\begin{algorithm}
\caption{MLorc-Lion}
\begin{algorithmic}[2]
\State \textbf{Input}: Initial weights \( W_1 \), learning rate \( \alpha \), betas \( \beta_1, \beta_2 \), target rank \( r \), oversampling parameter \( p \), batch size $b$.
\State Initialize RSVD factors: \( (m_{u,0}, m_{s,0}, m_{v,0}) \gets 0 \), \( t \gets 1 \)
\While{not converged}
    \State Sample a mini‑batch $B_t=\{\xi_t^i\}_{i=1}^b$ uniformly at random
    \State Compute gradient: \( g_t \gets \nabla f(W_{t}; B_t) \)

    \State \( \tilde{m}_{t-1} \gets m_{u,t-1} m_{s,t-1} m_{v,t-1}^\top \)

    \State \( c_t \gets \beta_1 \cdot \tilde{m}_{t-1} + (1 - \beta_1) \cdot g_t \)
    \State \( m_t \gets \beta_2 \cdot \tilde{m}_{t-1} + (1 - \beta_2) \cdot g_t \)

    \State \( (m_{u,t}, m_{s,t}, m_{v,t}) \gets \text{RSVD}(m_t, r, p) \)

    \State \( W_{t+1} \gets W_{t} - \alpha \cdot \text{sign}(c_t) \)
    \State \( t \gets t + 1 \)
\EndWhile
\State \Return \( W_t \)
\end{algorithmic}
\label{alg: lion}
\end{algorithm}

\begin{table*}[htbp]
\caption{Memory comparison. Assume $W\in R^{m\times n}$, rank $r$.}
\centering
\begin{tabular}{lcccccc}
\toprule
 & Full finetuning (AdamW) & LoRA (AdamW) & GaLore & MLorc-AdamW  \\
\midrule
Weights & $mn$ & $mn + mr + nr$ & $mn$ & $mn$ \\
Optimizer States & $2mn$ & $2mr + 2nr$  & $mr + 2nr$ & $2mr + 2nr$  \\
\bottomrule
\end{tabular}\label{tab: memory comparison}
\end{table*}

Algorithm \ref{alg: adam} shows the detailed description of MLorc-AdamW. 
Similarly, the high-level idea of MLorc can also be applied to other optimizers, such as Lion \citep{chen2023symbolic}. Since Lion only maintains a single first-order momentum, MLorc-Lion is simpler than MLorc-AdamW. It only requires reconstructing the previous momentum $\tilde{m}_{t-1}$ at each step $t$ and applying RSVD to compress the current momentum ${m}_{t}$. Algorithm \ref{alg: lion} shows a detailed description of MLorc-Lion. We will validate the effectiveness of MLorc across AdamW and Lion, and provide a theoretical convergence proof for MLorc-Lion.

\subsubsection{Memory Consumption Analysis}\label{section: memory consumption}

In \Cref{tab: memory comparison}, we provide a comparison of the memory consumption of four methods: full finetuning, LoRA, GaLore, and MLorc. For a parameter matrix of size $m \times n$, full finetuning (with AdamW) needs to store the weight $mn$ and two copies of the momentum terms $2mn$. LoRA requires storing additional weights $mr + nr$, but only needs $2mr + 2nr$ optimizer states. GaLore reduces optimizer memory to $2nr$; however, it also needs to store the projection matrix $mr$. MLorc-AdamW needs to store $m_u, m_s, m_v$ and $v_u, v_s, v_v$, and since $m_s$ and $v_s$ can be absorbed into $m_u, v_u$ (or into $m_v, v_v$), the optimizer memory of MLorc-AdamW is $2mr + 2nr$. Note that in typical fine-tuning settings (also in our experiments), $r\ll \min\{m,n\}$, which means, with the same rank $r$, LoRA, GaLore and MLorc achieve similar memory savings for the optimizer states.

GPU memory consumption during LLM training primarily comes from four sources: model weights, gradients, optimizer states, and activation values. Among these, weights and optimizer states always occupy GPU memory; therefore, in \Cref{tab: memory comparison} we primarily compare these two components. Other memory overhead varies depending on the forward and backward phases as well as the specific implementation.
LoRA typically needs to store gradients only for a small number of low-rank trainable parameters. For GaLore and MLorc, the gradients that must be stored depend on the specific implementation. Both GaLore and MLorc support per-layer weight updates \citep{lv2024full}, under which they need to store at most only one layer’s weight gradients, thus consuming very little memory. As shown in \Cref{app: memory}, under per-layer weight updates MLorc can use less memory than LoRA.
Even without per-layer weight updates, although GaLore and MLorc require more memory than LoRA for storing gradients, the overall memory usage also depends on the activations, which in turn depends on batch size. In fact, in the experiments corresponding to \Cref{tab:results} in our paper, the peak memory footprint often occurs during the forward pass, when all activation values must be stored, indicating that gradients storage does not affect the overall memory peak. Taken together, these observations suggest that MLorc can indeed achieve memory efficiency comparable to that of LoRA and GaLore.

\subsubsection{Convergence Analysis}\label{section: convergence}
In this section, we present a convergence analysis of MLorc-Lion (\Cref{alg: lion}), demonstrating that it can achieve the same sample complexity as the original Lion optimizer \citep{chen2023symbolic, dong2024convergence}. Lion is a well-known optimizer and often achieves performance comparable to AdamW when optimizing neural networks. Lion utilizes the sign function to adjust the update magnitude of each component, which is conceptually similar to AdamW. In this work, we demonstrate the theoretical guarantees of our MLorc framework by presenting the convergence analysis of MLorc-Lion. The extension to MLorc-AdamW is left for future work.
We consider optimizing loss function $f:\R^{m\times n}\to \R$. And we use the following standard assumptions. 

\begin{assumption}\label{ass: smooth}
    The loss function $f$ is $L$-Lipschitz
smooth, i.e. for any $W, W'\in \R^{m\times n}$, we have
\begin{align*}
    \|\nabla f(W)-\nabla f(W')\|_F\leq L\|W-W'\|_F.
\end{align*}
\end{assumption}

\begin{assumption}\label{ass: variance}
    $\nabla f(W;\xi)$ is an unbiased stochastic estimator of the true gradient $\nabla f(W)$ and have a bounded variance, i.e.
    \begin{align*}
        &\E[\nabla f(W;\xi)]=\nabla f(W)\\
        &\E\|\nabla f(W;\xi)-\nabla f(W)\|_F^2]\leq \sigma^2.
    \end{align*}
\end{assumption}

With these standard assumptions, we have following theorem.
\begin{theorem}[informal]\label{thm: lion}
    Under Assumptions \ref{ass: smooth} and \ref{ass: variance}, applying \Cref{alg: lion} with appropriate parameters, we have
    \begin{align*}
        \frac{1}{T}\sum_{t=1}^{T}\E[\|\nabla f(W_t)\|_{1,1}]\leq O(1)\left[\frac{\sqrt{dL\Delta}}{\sqrt{T}}+\frac{ \sigma \sqrt{d}}{\sqrt{b}}\right],
    \end{align*}
where $\Delta=f(W_1)-\inf_{W}f(W)$, $d=mn$.
\end{theorem}
The formal statement and proof of \Cref{thm: lion} can be found in \Cref{app: proof}. According to \Cref{thm: lion}, when $\sigma=0$ (deterministic case), we can find an $\epsilon$-entrywise $\ell_1$-norm stationary point of $f$ with a complexity of $O(\Delta L d \epsilon^{-2})$; when $\sigma\neq 0$ (stochastic case), with a large batch size $b = \Theta(d\sigma^2\epsilon^{-2})$, we can find an $\epsilon$-entrywise $\ell_1$-norm stationary point of $f$ with a sample complexity of $O(\Delta L d^2 \sigma^2 \epsilon^{-4})$, matching the same sample complexity as the original Lion \citep{dong2024convergence}.

\section{EXPERIMENTS}\label{section: exp}

In this section, we demonstrate the effectiveness of MLorc through extensive experiments on NLG and NLU tasks, spanning diverse models, datasets, and optimizers.\footnote{Code is available at \url{https://github.com/weishen-git/MLorc}.}

\subsection{Experiments on NLG tasks with LLaMA2-7B}\label{section: exp nlg}

In this subsection, we evaluate MLorc's performance on large language models, focusing on NLG (Natural Language Generation) tasks. We fine-tuned LLaMA 2-7B \citep{touvron2023llama} on two tasks: math and code. To demonstrate the effectiveness of MLorc, We compare MLorc's performance  with Full fine-tuning, LoRA \citep{hu2022lora}, GaLore \citep{zhao2024galore} and its variant LDAdamW\citep{robert2024ldadam} (both optimized by AdamW). We also compare MLorc's performance on MLorc-Lion \citep{chen2023symbolic} with Full Lion and LoRA (Lion) to explore its applicability to different optimizers. Experimental results suggest that MLorc significantly reduces training loss during optimization and improves validation accuracy on test datasets.

\textbf{Experimental Setup.}  For the math task, the model was fine-tuned LLaMA 2-7B on the MetaMathQA dataset \citep{yu2023metamath} and evaluated on GSM8K \citep{cobbe2021training} validation sets. For the code task, the model was fine-tuned on the CodeFeedback dataset \citep{zheng2024opencodeinterpreter} and evaluated on the HumanEval \citep{chen2021evaluating} dataset. All experiments were conducted on 1 H100-96 GPU, using subsets of training datasets containing 10K data points and were trained for 2 epochs. We use a rank=4 for all memory-efficient training paradigm, a batch size of 32 with gradient checkpointing and without gradient accumulation, a linear learning rate scheduler with a warmup ratio of 0.03. We set learning rate after tuning on each method and each dataset. It is worth mentioning that we set \(\beta_{1}\) of MLorc-AdamW as 0.8 rather than the default value 0.9, in order to mitigate the influence of approximation error arising from RSVD. More hyperparameter setups, such as specific learning rates and oversampling parameters, can be found in \Cref{app: nlg}. Average accuracy over four evaluations and the standard deviation of accuracy is reported in Table~\ref{tab:results}.

\begin{table}[htbp]
\centering
\caption{Results of LLaMA 2-7B fine-tuned on math and code tasks. MLorc consistently outperforms other memory-efficient training paradigms. Lion version of GaLore and LDAdamW are not available; LoRA is independent of the choice of optimizer.}
\label{tab:results}
\begin{tabular}{lcc}
\toprule
Method(r=4) & GSM8K & HumanEval \\
\midrule
Full (AdamW) & \(47.69_{\pm 0.15}\) & \(21.96_{\pm 0.46}\) \\
MLorc (AdamW) & \(47.37_{\pm 1.09}\) & \(20.70_{\pm 0.42}\) \\
LoRA (AdamW) & \(45.98_{\pm 0.52}\) & \(17.85_{\pm 1.07}\) \\
GaLore & \(38.89_{\pm 0.73}\) & \(17.25_{\pm 0.49}\) \\
LDAdamW & \(41.85_{\pm 0.60}\) & \(18.60_{\pm 1.08}\) \\
\\
Full (Lion) & \(46.38_{\pm 1.11}\) & \(18.00_{\pm 0.30}\) \\
MLorc (Lion) & \(47.75_{\pm 0.25}\) & \(18.75_{\pm 0.78}\) \\
LoRA (Lion) & \(45.53_{\pm 0.52}\) & \(16.00_{\pm 0.83}\) \\
\bottomrule
\end{tabular}
\end{table}

\begin{figure*}[htbp]
  \centering
  \begin{subfigure}[t]{0.48\textwidth}
    \centering
    \includegraphics[width=\linewidth]{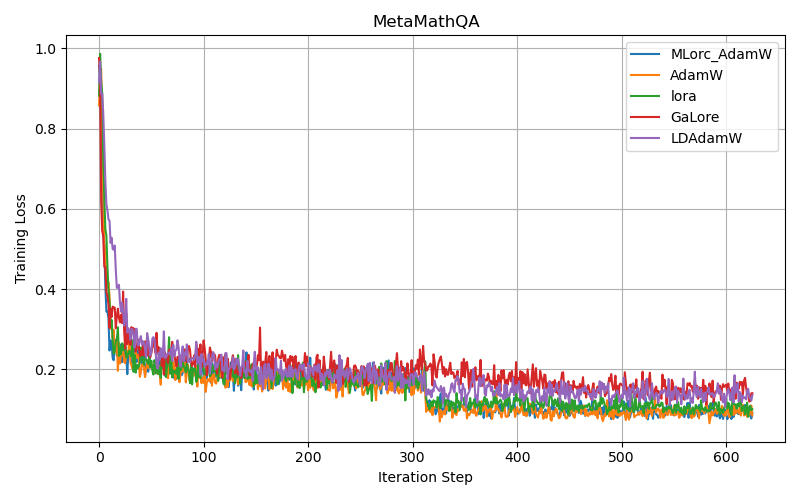}
    \caption{Training Loss on MetaMathQA\citep{yu2023metamath} dataset}
    \label{fig:mathloss_plot_AdamW}
  \end{subfigure}
  \hfill
  \begin{subfigure}[t]{0.48\textwidth}
    \centering
    \includegraphics[width=\linewidth]{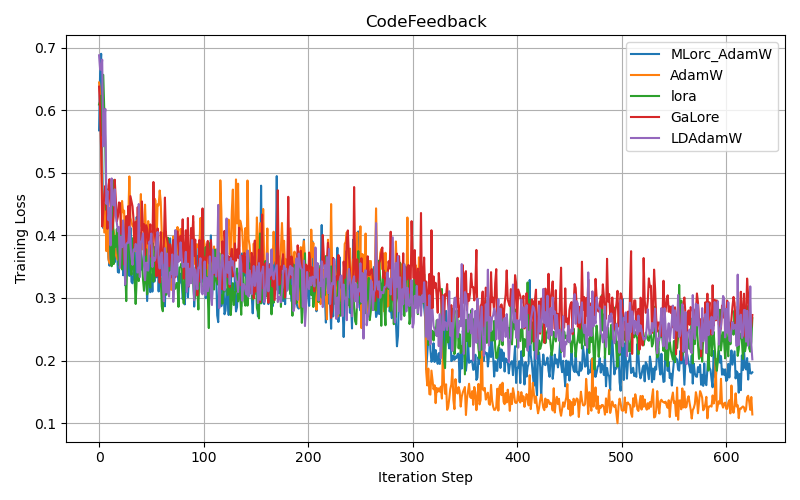}
    \caption{Training Loss on CodeFeedback\citep{zheng2024opencodeinterpreter} dataset}
    \label{fig:codeloss_plot_AdamW}
  \end{subfigure}
  \caption{Training Loss of AdamW of different methods}
  \label{fig:loss_plot_AdamW}
\end{figure*}

As shown in  Table~\ref{tab:results}, MLorc outperforms LoRA \citep{hu2022lora}, GaLore \citep{zhao2024galore} and LDAdamW \citep{robert2024ldadam} on both math and coding task, significantly reduces accuracy gap between Full-parameter fine-tuning and existing memory-efficient training paradigms, demonstrating its ability in handling complex tasks on large language models. Additionally, the optimal learning rate of MLorc is much closer to Full-parameter fine-tuning than other memory-efficient training paradigms (see \Cref{app: setting}), which suggests it might have similar training dynamics and indicates its utility in accelerating convergence.

\begin{figure*}[htbp]
  \centering
  \begin{subfigure}[t]{0.48\textwidth}
    \centering
    \includegraphics[width=\linewidth]{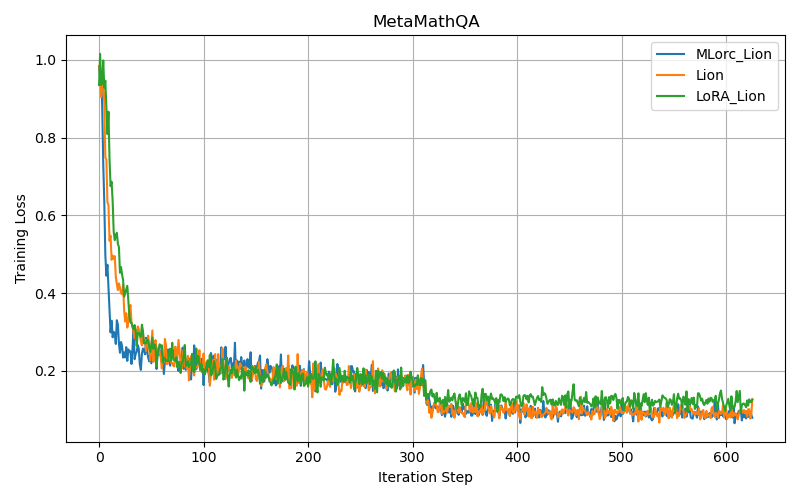}
    \caption{Training Loss on MetaMathQA\citep{yu2023metamath} dataset}
    \label{fig:mathloss_plot_Lion}
  \end{subfigure}
  \hfill
  \begin{subfigure}[t]{0.48\textwidth}
    \centering
    \includegraphics[width=\linewidth]{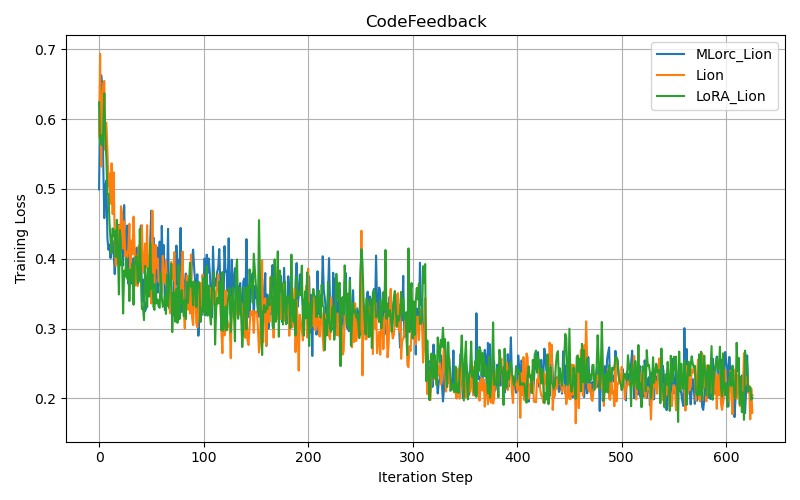}
    \caption{Training Loss on CodeFeedback\citep{zheng2024opencodeinterpreter} dataset}
    \label{fig:codeloss_plot_Lion}
  \end{subfigure}
  \caption{Training Loss of Full Lion and Lion with MLorc}
  \label{fig:loss_plot_Lion}
\end{figure*}

\textbf{Training Loss Curve.}  As shown in Figure~\ref{fig:loss_plot_AdamW} and~\ref{fig:loss_plot_Lion}, in most cases, the training loss of MLorc is  smaller than other memory-efficient training paradigms, and is usually close to the full version, whenever in AdamW or Lion. This shows that MLorc behaves similarly to the full version, hence providing strong evidence for its effectiveness.

\begin{table}[htbp]
\centering
\caption{Memory consumption of different methods with AdamW optimizer when training on MetaMathQA\citep{yu2023metamath}. Hyperparameters and other settings are same as previous experiments.}
\label{tab:memory}
\begin{tabular}{|c|c|c|c|}
\hline
MLorc & LoRA & GaLore & LDAdamW\\
\hline
44.8GB & 45.6GB & 44.8GB & 54.6GB\\
\hline
\end{tabular}
\end{table}

\begin{table}[htbp]
\centering
\caption{Training time of different memory-efficient training methods with AdamW optimizer when training on MetaMathQA \citep{yu2023metamath}. Hyperparameters and other settings are the same as the previous experiments.}
\label{tab:time}
\begin{tabular}{|c|c|c|c|}
\hline
MLorc & LoRA & GaLore & LDAdamW \\
\hline
1h25min & 1h24min & 1h33min & 1h26min \\
\hline
\end{tabular}
\end{table}

\textbf{Time and Memory Efficiency.} Table~\ref{tab:memory} compares memory consumption between different training methods. MLorc, GaLore and LoRA have similar memory efficiency; LDAdamW consumes more memory, probably due to its error feedback mechanism. Table~\ref{tab:time} compares training time between different memory-efficient training methods. Experimental result shows that MLorc achieves time efficiency comparable to that of LoRA \citep{hu2022lora} and reduces training time compared to GaLore \citep{zhao2024galore}, confirming that the additional overhead from compression and reconstruction is negligible in practical fine-tuning scenarios.

\begin{table*}[h]
\centering
\caption{GLUE benchmark results of memory-efficient fine-tuning methods using pre-trained RoBERTa-Base. We set rank as 8 for all four memory-efficient methods. We used AdamW as the optimizer in full finetuning and LoRA. MLorc and Galore refer to MLorc-AdamW and Galore-AdamW respectively. Best performances among MLorc, Lora, GaLore and LDAdamW are highlighted in bold.}
\label{tab:glue_results}
\begin{tabular}{lccccccccc}
\toprule
Method & CoLA & MNLI & MRPC & QNLI & QQP & RTE & SST2 & STSB & Avg \\
\midrule
Full     & 62.33 & 87.62 & 91.11 & 92.92 & 90.26 & 75.81 & 95.18 & 90.50 & 85.72 \\ \midrule
MLorc    & \textbf{62.07} & \textbf{87.53} & \textbf{90.77} & \textbf{93.19} & 88.99 & \textbf{77.98} & \textbf{95.18} & 90.59 & \textbf{85.79} \\
LoRA     & 61.53 & 87.51 & 90.10 & 92.75 & 89.45 & 76.53 & 94.72 & \textbf{90.74} & 85.42 \\
GaLore   & 60.34 & 86.84 & 90.10 & 92.42 & 88.15 & 71.12 & 94.38 & 90.50 & 84.23 \\
LDAdamW  & 60.82 & 87.33 & 90.35 & 93.03 & \textbf{90.06} & 76.53 & 94.61 & \textbf{90.74} & 85.43 \\
\bottomrule
\end{tabular}
\end{table*}

\subsection{Experiments on Natural Language Understanding Tasks}

In this subsection, we assess the effectiveness of MLorc in fine-tuning language models for natural language understanding (NLU) tasks. Specifically, we fine-tune pre-trained RoBERTa models \citep{liu2019roberta} on the GLUE benchmark \citep{wang2018glue} using MLorc-AdamW, and compare its performance against full fine-tuning, LoRA \citep{hu2022lora}, GaLore \citep{zhao2024galore} and LDAdamW \citep{robert2024ldadam}. As shown in \Cref{tab:glue_results}, MLorc significantly outperforms GaLore, surpasses LoRA and LDAdamW on most tasks, and achieves overall performance comparable to that of full fine-tuning. Detailed experimental settings can be found in \Cref{app: nlu}.

We also performed experiments to examine the low-rank structure of the gradient, first-order momentum, and second-order momentum during the full fine-tuning process of AdamW. The results on the STSB dataset are shown in Figure \ref{fig: moment stsb}, and additional experimental results can be found in \Cref{app: low rank}.

\section{CONCLUSIONS AND FUTURE WORK}\label{section: con}

In this work, we introduce MLorc (Momentum Low-rank Compression), a novel memory-efficient optimization paradigm designed to bridge the gap between parameter-efficient fine-tuning and full-parameter training for large language models (LLMs). Unlike existing low-rank adaptation methods such as LoRA \citep{hu2022lora} and GaLore \citep{zhao2024galore}, MLorc leverages a previously underexplored insight: momentum is low-rank, and it can be compressed without significantly breaking training dynamics. By applying Randomized SVD (RSVD) \citep{halko2011finding} to compress and reconstruct momentum states instead of gradients, MLorc achieves a better balance between memory savings and training dynamics preservation.

Through comprehensive empirical evaluations across various model architectures, optimizers (e.g., AdamW and Lion), and NLP tasks (including NLG and NLU benchmarks), we demonstrate that MLorc: (1) consistently outperforms LoRA, GaLore and LDAdamW in terms of validation accuracy; (2) matches or exceeds full fine-tuning performance with a small compression rank (e.g., \(r=4\)); (3) maintains comparable memory consumption to LoRA and achieves better time efficiency than GaLore; (4) shows strong training dynamics alignment with full fine-tuning, as evidenced by its training loss curves and optimal learning rates.

MLorc contributes to reducing the environmental impact of large-scale model training by significantly lowering GPU memory usage and computation overhead, which can lead to reduced energy consumption during fine-tuning.

Looking ahead, we identify several promising directions for future work to validate and enhance MLorc: (1) Although our experiments focus on fine-tuning, extending MLorc to large-scale pre-training holds strong potential, as demonstrated by GaLore \citep{zhao2024galore}, which shows that memory-efficient training schemes can significantly reduce memory usage while preserving model quality;  (2) the current approach to compress momentum may not be optimal, so exploring alternative compression strategies with improved memory/time efficiency and approximation accuracy could be valuable; (3) our experiments were limited to models up to 7B parameters, so further empirical evaluation on larger-scale models (e.g., GPT-3 \citep{brown2020language}) would help assess the scalability and effectiveness of MLorc.

\section*{Acknowledgments}
The work of Jiawei Zhang was supported by the Office of the Vice Chancellor for Research and Graduate Education at the University of Wisconsin–Madison with funding from the Wisconsin Alumni Research Foundation.

\bibliographystyle{apalike}
\bibliography{ref}

\section*{Checklist}

The checklist follows the references. For each question, choose your answer from the three possible options: Yes, No, Not Applicable.  You are encouraged to include a justification to your answer, either by referencing the appropriate section of your paper or providing a brief inline description (1-2 sentences). 
Please do not modify the questions.  Note that the Checklist section does not count towards the page limit. Not including the checklist in the first submission won't result in desk rejection, although in such case we will ask you to upload it during the author response period and include it in camera ready (if accepted).

\textbf{In your paper, please delete this instructions block and only keep the Checklist section heading above along with the questions/answers below.}

\begin{enumerate}

  \item For all models and algorithms presented, check if you include:
  \begin{enumerate}
    \item A clear description of the mathematical setting, assumptions, algorithm, and/or model. [Yes] We clearly describe the assumptions in \Cref{ass: smooth}, \ref{ass: variance}; algorithms in \Cref{alg: adam}, \ref{alg: lion}.
    \item An analysis of the properties and complexity (time, space, sample size) of any algorithm. [Yes] We analysis the memory consumption in \Cref{section: memory consumption}, and provide empirical memory consumption and training time of algorithms in \Cref{tab:memory} and \ref{tab:time}.
    \item (Optional) Anonymized source code, with specification of all dependencies, including external libraries. [Yes] We provide anonymized source code in the Supplementary Material.
  \end{enumerate}

  \item For any theoretical claim, check if you include:
  \begin{enumerate}
    \item Statements of the full set of assumptions of all theoretical results. [Yes] We clearly state the assumptions in \Cref{ass: smooth}, \ref{ass: variance}.
    \item Complete proofs of all theoretical results. [Yes] We provide complete proofs in \Cref{app: proof}.
    \item Clear explanations of any assumptions. [Yes] We explain the   assumptions in \Cref{section: convergence}.  
  \end{enumerate}

  \item For all figures and tables that present empirical results, check if you include:
  \begin{enumerate}
    \item The code, data, and instructions needed to reproduce the main experimental results (either in the supplemental material or as a URL). [Yes] We provide the code and instructions needed to reproduce the main experimental results in the Supplementary Material.
    \item All the training details (e.g., data splits, hyperparameters, how they were chosen). [Yes] We mentioned those details in \Cref{section: exp} and \Cref{app: exp}.
    \item A clear definition of the specific measure or statistics and error bars (e.g., with respect to the random seed after running experiments multiple times). [Yes] We mentioned the details of error bars in \Cref{section: exp}: it's the standard deviation of four evaluation.
    \item A description of the computing infrastructure used. (e.g., type of GPUs, internal cluster, or cloud provider). [Yes] We mentioned the computing infrastructure used in \Cref{section: exp} and \Cref{app: exp}.
  \end{enumerate}

  \item If you are using existing assets (e.g., code, data, models) or curating/releasing new assets, check if you include:
  \begin{enumerate}
    \item Citations of the creator If your work uses existing assets. [Yes] We cited the existing assets we used.
    \item The license information of the assets, if applicable. [Yes] We provide the license information of the assets in \Cref{app: lincese}.
    \item New assets either in the supplemental material or as a URL, if applicable. [Yes] We upload our source code in the Supplementary Material.
    \item Information about consent from data providers/curators. [Not Applicable]
    \item Discussion of sensible content if applicable, e.g., personally identifiable information or offensive content. [Not Applicable]
  \end{enumerate}

  \item If you used crowdsourcing or conducted research with human subjects, check if you include:
  \begin{enumerate}
    \item The full text of instructions given to participants and screenshots. [Not Applicable]
    \item Descriptions of potential participant risks, with links to Institutional Review Board (IRB) approvals if applicable. [Not Applicable]
    \item The estimated hourly wage paid to participants and the total amount spent on participant compensation. [Not Applicable]
  \end{enumerate}

\end{enumerate}

\clearpage
\appendix
\thispagestyle{empty}

\onecolumn

\newpage
\section*{Appendix}

Appendix is organized as follows. \Cref{app: rsvd} introduces details on RSVD. \Cref{app: proof} provides a complete proof of \Cref{thm: lion}. \Cref{app: exp} presents additional experimental evidence on the low-rank structure and memory efficiency of MLorc. \Cref{app: setting} gives detailed hyperparameter settings of experiments in \Cref{section: exp} for reproducibility.

\section{Details on RSVD}\label{app: rsvd}
Randomized Singular Value Decomposition (RSVD) is an efficient algorithm for computing a low-rank approximation of large matrices. Unlike the classical SVD, which can be computationally expensive for large-scale data, RSVD uses random projections to reduce the dimensionality of the input matrix before performing decomposition. This significantly accelerates the computation while retaining high approximation accuracy.\\

\begin{algorithm}
\caption{Randomized SVD (RSVD) with Oversampling}
\begin{algorithmic}
\Require Matrix \( A \in \mathbb{R}^{m \times n} \), target rank \( r \), oversampling parameter \( p \)
\Ensure Approximate rank-\( r \) SVD: \( A \approx U \Sigma V^\top \)
\State \( l \gets r + p \) 
\State Generate a random Gaussian matrix \( \Omega \in \mathbb{R}^{n \times l} \)
\State \( Y \gets A \Omega \in \mathbb{R}^{m \times l} \)
\State Compute the QR decomposition: \( Y = QR \)
\State \( B \gets Q^\top A \in \mathbb{R}^{l \times n} \)
\State Compute SVD of the small matrix: \( \tilde{U}, \Sigma, V^\top = SVD(B) \)
\State \( U \gets Q \tilde{U} \)
\State \Return \( U, \Sigma, V \)
\end{algorithmic}
\end{algorithm}
The key idea behind RSVD is to first project the original matrix $A \in \mathbb{R}^{m \times n}$ onto a lower-dimensional subspace using a random matrix \(\Omega\) producing a smaller matrix \(Y = A\Omega.\) Then, it performs a standard SVD or QR decomposition on \(Y\), and reconstructs the approximate SVD \(A\) from this compressed representation.

Concerning the precision of RSVD, we have the following theorem, which is Theorem 10.5 of \citep{halko2011finding}:

\begin{lemma}[Approximation error bound of RSVD]\label{lemma: rsvd}
    Let \( A \in \mathbb{R}^{m \times n} \) have singular values \( \sigma_1 \geq \sigma_2 \geq \cdots \). For a target rank \( r \geq 2\) and oversampling parameter \( p \geq 2\). When $r+p\leq \min\{m,n\}$, the randomized SVD algorithm produces an approximation \( A_{RS} \) such that
\begin{equation}
\mathbb{E} \left[ \| A - A_{RS} \|_F \right] \leq \left(1 + \frac{r}{p - 1} \right)^{\frac{1}{2}} (\sum_{j > r} \sigma_j^2)^{\frac{1}{2}}.
\label{eq:rsvd_error}
\end{equation}
\end{lemma}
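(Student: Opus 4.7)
The plan is to follow the classical Halko--Martinsson--Tropp argument, which splits into a deterministic structural bound followed by a moment computation over the Gaussian test matrix $\Omega$. First I would observe that the algorithm's output satisfies $A_{RS} = Q Q^\top A$: the line $B = Q^\top A$ is exactly the orthogonal projection of $A$'s rows, and the subsequent SVD $B = \tilde U \Sigma V^\top$ followed by $U \gets Q\tilde U$ just rewrites $QB = QQ^\top A$ in its SVD form. So the problem reduces to bounding $\|A - QQ^\top A\|_F$, where $Q$ is the orthonormal basis for the range of $Y = A\Omega$.

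Next I would set up the deterministic structural bound. Let $A = U \Sigma V^\top$ be the full SVD and partition $V = [V_1 \; V_2]$ where $V_1$ holds the first $r$ right singular vectors, with $\Sigma_1,\Sigma_2$ the corresponding diagonal blocks. Define $\Omega_1 = V_1^\top \Omega \in \R^{r\times l}$ and $\Omega_2 = V_2^\top \Omega \in \R^{(n-r)\times l}$, so that $A\Omega = U_1\Sigma_1\Omega_1 + U_2\Sigma_2\Omega_2$. On the almost-sure event that $\Omega_1$ has full row rank, one can exhibit an explicit matrix $X = A\Omega \Omega_1^\dagger \Sigma_1^{-1}$ whose columns lie in $\mathrm{range}(Q)$ and come close to $U_1\Sigma_1 V_1^\top$, and a short algebraic manipulation (Theorem 9.1 in the cited reference) produces the deterministic inequality
\begin{equation*}
\|A - QQ^\top A\|_F^2 \;\le\; \|\Sigma_2\|_F^2 \;+\; \|\Sigma_2 \Omega_2 \Omega_1^\dagger\|_F^2.
\end{equation*}

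Then I would take expectations, using rotational invariance of the Gaussian ensemble: since $V$ is orthogonal, $\Omega_1$ and $\Omega_2$ are independent standard Gaussian matrices of sizes $r\times l$ and $(n-r)\times l$. Conditioning on $\Omega_1$ and averaging over $\Omega_2$ via the identity $\E[\Omega_2^\top M \Omega_2] = \mathrm{tr}(M)\,I$ for Gaussian $\Omega_2$ gives $\E_{\Omega_2}[\|\Sigma_2\Omega_2\Omega_1^\dagger\|_F^2 \mid \Omega_1] = \|\Sigma_2\|_F^2 \cdot \|\Omega_1^\dagger\|_F^2$. One then needs the moment identity $\E[\|\Omega_1^\dagger\|_F^2] = r/(p-1)$ for $p\ge 2$. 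Combining,
\begin{equation*}
\E[\|A-A_{RS}\|_F^2] \;\le\; \Bigl(1 + \tfrac{r}{p-1}\Bigr)\|\Sigma_2\|_F^2 \;=\; \Bigl(1+\tfrac{r}{p-1}\Bigr)\sum_{j>r}\sigma_j^2,
\end{equation*}
and Jensen's inequality $\E[\|A-A_{RS}\|_F] \le (\E[\|A-A_{RS}\|_F^2])^{1/2}$ gives the stated bound.

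The main obstacle is the Gaussian pseudoinverse moment calculation $\E[\|\Omega_1^\dagger\|_F^2] = r/(p-1)$: writing $\|\Omega_1^\dagger\|_F^2 = \mathrm{tr}((\Omega_1\Omega_1^\top)^{-1})$, the quantity $\Omega_1\Omega_1^\top$ is a Wishart matrix $W_r(l, I_r)$, and the identity follows from the inverse-Wishart mean formula $\E[(\Omega_1\Omega_1^\top)^{-1}] = I_r/(l-r-1) = I_r/(p-1)$, which requires $p\ge 2$ for integrability. The structural inequality itself (the choice of witness matrix $X$), the rotational-invariance decomposition, and the final Jensen step are comparatively routine linear algebra and probability once the moment identity is in hand.
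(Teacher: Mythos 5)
Your proposal is correct and is essentially the argument behind this result: the paper does not prove the lemma itself but imports it verbatim as Theorem 10.5 of the cited Halko--Martinsson--Tropp paper, whose proof proceeds exactly as you outline (the deterministic bound of their Theorem 9.1, the Gaussian and inverse-Wishart moment identities giving $\E\|\Omega_1^\dagger\|_F^2 = r/(p-1)$, then H\"older/Jensen to pass from the second moment to the first). The only caveat worth noting is that both the bound and your Wishart step require $p \ge 2$, whereas the paper's experiments set $p=0$; this concerns the applicability of the statement, not the validity of your proof.
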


This lemma suggests that when the desired rank \(r\) is small (e.g, \(r=4\)), with a proper oversampling parameter \( p \), RSVD has the same level approximation error(in expectation) as exact SVD up to a constant. We note that though the oversampling parameter \( p \) is involved in providing an error bound for RSVD, empirically, it does not significantly influence the experimental result. To reduce computational overhead, we set oversampling parameter \( p=0 \) in each of our experiments.

\section{Proofs}\label{app: proof}
Our proof generally follows that of \citep{dong2024convergence}, while additionally analyzing and bounding the error introduced by the low-rank compression in MLorc.

\subsection{Lemmas}

\begin{lemma}\label{lemma: compression error}
When $r\geq 2, p\geq 2$ and $r+p\leq \min\{m,n\}$, define $\gamma=\left(1 + \frac{r}{p - 1} \right)^{\frac{1}{2}}$. Then we have
    \begin{align*}
        \E[\|\tilde{m}_t- m_t\|_F]\leq \gamma\|g_t\|_F
    \end{align*}
\end{lemma}
\begin{proof}
Let  \( \sigma_1 \geq \sigma_2 \geq \cdots \)  be the singular values of $m_t$. We have
    \begin{align*}
        \E[\|\tilde{m}_t- m_t\|_F]\leq& \left(1 + \frac{r}{p - 1} \right)^{\frac{1}{2}}(\sum_{j > r} \sigma_j^2)^{\frac{1}{2}}\\
        \leq &\left(1 + \frac{r}{p - 1} \right)^{\frac{1}{2}}\|m_t-\beta_2 \tilde{m}_{t-1}\|_F\\
        =&\left(1 + \frac{r}{p - 1} \right)^{\frac{1}{2}}(1-\beta_2)\|g_t\|_F,
    \end{align*}
where the first inequality is due to \Cref{lemma: rsvd}, while the second inequality follows from the Eckart–Young–Mirsky theorem, noting that $\tilde{m}_{t-1}$ is a rank $r$ matrix.
\end{proof}

\begin{lemma}\label{lemma: momentum error}
Denote $\delta_t= c_t-\nabla f( W_t)$.  Under \Cref{ass: smooth}, \ref{ass: variance}, we have
\begin{align}
\begin{aligned}\notag
\frac{1}{T}\sum_{t=1}^T\E\left[\|\delta_t\|_{1,1}\right]\leq& \frac{\sqrt{d}\sigma}{\sqrt{b} T(1-\beta_2)} + \frac{2L\alpha d}{1-\beta_2}+ \left(|\beta_1-\beta_2|+(1-\beta_1)\right)\cdot\frac{\sqrt{d}\sigma}{\sqrt{b(1-\beta_2)}}\\
&+\frac{1}{T}\sum_{t=1}^{T-1} \beta_1\sqrt{d}\gamma\E\left[\|\nabla f( W_t)\|_{1,1}\right]+ \gamma\frac{\sigma\sqrt{d}}{\sqrt{b}}
\end{aligned}
\end{align}
\end{lemma}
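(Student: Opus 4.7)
My plan is to decompose $\delta_t = c_t - \nabla f(W_t)$ into four conceptually distinct sources of error --- the stochastic variance of the mini-batch gradient, the Lipschitz drift of the target caused by the iterate moving, the truncation/initialization bias from the geometric weighting, and the low-rank compression error introduced by RSVD --- and to bound each separately before averaging over $t=1,\dots,T$. A convenient algebraic reduction is the identity
\begin{equation*}
c_t = \frac{1-\beta_1}{1-\beta_2}\, m_t + \frac{\beta_1-\beta_2}{1-\beta_2}\, \tilde{m}_{t-1},
\end{equation*}
obtained by solving the $m_t$ recursion for $g_t$ and substituting into the definition of $c_t$. Taking the difference with $\nabla f(W_t)$ already explains the appearance of the prefactors $(1-\beta_1)$ and $|\beta_1-\beta_2|$, each divided by $(1-\beta_2)$, in the stated bound, and reduces the problem to bounding $m_t - \nabla f(W_t)$ and $\tilde{m}_{t-1}-\nabla f(W_t)$.

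Writing $\epsilon_{s-1} := \tilde{m}_{s-1} - m_{s-1}$ for the compression error and unrolling $m_t = \beta_2 m_{t-1} + \beta_2\epsilon_{t-1} + (1-\beta_2) g_t$ from $m_0 = 0$, I would obtain
\begin{align*}
m_t - \nabla f(W_t) ={} & -\beta_2^t \nabla f(W_t) + (1-\beta_2)\sum_{s=1}^t \beta_2^{t-s}(g_s - \nabla f(W_s)) \\
& + (1-\beta_2)\sum_{s=1}^t \beta_2^{t-s}(\nabla f(W_s) - \nabla f(W_t)) + \beta_2 \sum_{s=1}^t \beta_2^{t-s}\epsilon_{s-1},
\end{align*}
and then bound each piece. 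For the variance term, Assumption~\ref{ass: variance} and independence across mini-batches give $\E\|g_s - \nabla f(W_s)\|_F^2 \le \sigma^2/b$; combined with $\sum \beta_2^{2(t-s)}\le 1/(1-\beta_2^2)$ and Jensen this yields a Frobenius bound of order $\sigma\sqrt{1-\beta_2}/\sqrt{b}$, converted to $\ell_{1,1}$ by $\|\cdot\|_{1,1}\le \sqrt{d}\,\|\cdot\|_F$. For the Lipschitz drift, the Lion update satisfies $\|W_t - W_{t-1}\|_F = \alpha\|\sign(c_t)\|_F = \alpha\sqrt{d}$, so Assumption~\ref{ass: smooth} gives $\|\nabla f(W_s) - \nabla f(W_t)\|_F \le L\alpha\sqrt{d}\,(t-s)$, and $\sum(t-s)\beta_2^{t-s}\le \beta_2/(1-\beta_2)^2$ produces the $L\alpha d/(1-\beta_2)$ contribution after converting to $\ell_{1,1}$. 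For the compression term, Lemma~\ref{lemma: compression error} together with $\E\|g_{s-1}\|_F \le \E\|\nabla f(W_{s-1})\|_F + \sigma/\sqrt{b}$ (from unbiasedness plus bounded variance) yields both the $\gamma_2\,\E\|\nabla f(W_t)\|_{1,1}$ and $\gamma_2\sigma/\sqrt{b}$ contributions, together with the residual $\gamma_1\,\E\|c_t\|_{1,1}$ term. Finally, the initialization bias $\beta_2^t \nabla f(W_t)$ is handled by averaging: $\frac{1}{T}\sum_{t=1}^T \beta_2^t \le \frac{1}{T(1-\beta_2)}$, which yields the leading $1/(T(1-\beta_2))$ factor in the first term of the bound.

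The main obstacle, and the reason the final estimate is not closed-form, is the recursive coupling introduced by compression: $\epsilon_{s-1}$ can only be controlled in terms of $\|c_{s-1}\|_F$ and $\|g_{s-1}\|_F$, so the term $\frac{1}{T}\sum_{t=1}^{T-1}\frac{\beta_1\sqrt{d}\gamma_1}{1-\beta_2}\E\|c_t\|_{1,1}$ does not close and must survive on the right-hand side, to be absorbed later by the descent inequality for Lion in the main convergence theorem. A secondary obstacle is careful bookkeeping of the three different $(1-\beta_2)$ factors (one from the weighting coefficient inside $m_t$, one from the geometric-sum denominator, and one from the identity reducing $c_t$ to $m_t$) so that the six terms in the statement appear with their stated coefficients; in particular the combination $(|\beta_1-\beta_2|+(1-\beta_1))/\sqrt{1-\beta_2}$ in the variance contribution arises precisely from summing the $m_t$ and $\tilde{m}_{t-1}$ variance estimates after applying the initial identity.
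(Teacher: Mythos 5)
Your route is genuinely different from the paper's. The paper derives a one--step linear recursion directly on $\delta_t$,
\[
\delta_t = \beta_1(\tilde m_{t-1}-m_{t-1}) + \beta_2\delta_{t-1} - \beta_1\bigl(\nabla f(W_t)-\nabla f(W_{t-1})\bigr) + (\beta_1-\beta_2)\xi_{t-1}+(1-\beta_1)\xi_t,
\]
and unrolls once; the contraction factor $\beta_2$, the drift coefficient $\beta_1$, and the noise coefficients $(\beta_1-\beta_2)$, $(1-\beta_1)$ all appear at each step, so a single geometric sum in $\beta_2$ produces exactly the stated prefactors. Your plan instead passes through the (correct) identity $c_t = \tfrac{1-\beta_1}{1-\beta_2}m_t + \tfrac{\beta_1-\beta_2}{1-\beta_2}\tilde m_{t-1}$, then unrolls $m_t$. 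This is a legitimate alternative skeleton and your account of the four error sources, of the survival of the $\gamma_1\E\|c_t\|_{1,1}$ term, and of the $(|\beta_1-\beta_2|+(1-\beta_1))/\sqrt{1-\beta_2}$ variance prefactor are all consistent with the paper.

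There are, however, two concrete places where the plan as written would not yield the lemma's stated constants. First, bounding $m_t-\nabla f(W_t)$ and $\tilde m_{t-1}-\nabla f(W_t)$ separately and recombining by triangle inequality multiplies every contribution by $\tfrac{1-\beta_1}{1-\beta_2}+\tfrac{|\beta_1-\beta_2|}{1-\beta_2}$, which equals $1$ only when $\beta_1\geq\beta_2$; for Lion's $\beta_1<\beta_2$ it equals $\tfrac{1+\beta_2-2\beta_1}{1-\beta_2}\gg1$ (e.g., $19$ at $\beta_1=0.9,\beta_2=0.99$). Your drift and compression terms then carry this factor, whereas the lemma's drift term is $\tfrac{2L\alpha d}{1-\beta_2}$ with no such prefactor. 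To recover the stated bound you would need to combine the two pieces \emph{before} bounding, exploiting the cancellation $(1-\beta_1)\beta_2^{t-s}+(\beta_1-\beta_2)\beta_2^{t-1-s}=\beta_1(1-\beta_2)\beta_2^{t-1-s}$, but doing so essentially reconstructs the paper's $\delta_t$-recursion, so your route is a detour rather than an independent proof. (Under the theorem's eventual scaling $1-\beta_1=\Theta(1-\beta_2)$ the extra factor is $O(1)$, so your weaker lemma would still suffice for the final rate.) Second, you unroll from $m_0=0$, which leaves a bias $-\beta_2^t\nabla f(W_t)$; this is controlled by $\|\nabla f(W_t)\|$, not by the noise, and cannot be made to produce the lemma's first term $\tfrac{\sqrt d\,\sigma}{\sqrt b\,T(1-\beta_2)}$ without an extra boundedness argument. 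The paper's proof (contrary to the pseudocode) takes $m_0=g_1$ so that $\delta_1=g_1-\nabla f(W_1)$ and the bias is bounded by $\sigma/\sqrt b$; you would need the same adjustment.
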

\begin{proof}
Denote  $\xi_t= g_t-\nabla f( W_t)$. We have
\begin{align*}
\delta_t=&\beta_1\tilde{m}_{t-1}+(1-\beta_1) g_t-\nabla f( W_t)\\
=&\beta_1 (\tilde{m}_{t-1}- m_{t-1})+ \beta_1\beta_2\tilde{m}_{t-2}+\beta_1(1-\beta_2)\nabla f(W_{t-1})+(1-\beta_1) g_t-\nabla f( W_t)\\
=&\beta_1 (\tilde{m}_{t-1}- m_{t-1})+ \beta_2( c_{t-1}-(1-\beta_1)\nabla f(W_{t-1}))+\beta_1(1-\beta_2)\nabla f(W_{t-1})+(1-\beta_1) g_t-\nabla f( W_t)\\
=&\beta_1 (\tilde{m}_{t-1}- m_{t-1})+ \beta_2(\delta_{t-1}+\nabla f( W_{t-1}))-\beta_2(1-\beta_1)g_{t-1}+\beta_1(1-\beta_2)\nabla f(W_{t-1})+(1-\beta_1) g_t-\nabla f( W_t)\\
=&\beta_1 (\tilde{m}_{t-1}- m_{t-1})+ \beta_2(\delta_{t-1}+\nabla f( W_{t-1}))+(\beta_1-\beta_2)(\xi_{t-1}+\nabla f( W_{t-1}))+(1-\beta_1)(\xi_t+\nabla f( W_t))-\nabla f( W_t)\\
=&\beta_1 (\tilde{m}_{t-1}- m_{t-1})+ \beta_2\delta_{t-1}-\beta_1(\nabla f( W_t)-\nabla f( W_{t-1}))+(\beta_1-\beta_2)\xi_{t-1}+(1-\beta_1)\xi_t\\
=&\beta_2^{t-1}\delta_1 + \sum_{k=2}^k\beta_2^{t-k}\Bigg(-\beta_1\left(\nabla f( W_k)-\nabla f( W_{k-1})\right)+(\beta_1-\beta_2)\xi_{k-1}+(1-\beta_1)\xi_k+\beta_1(\tilde{m}_{k-1}- m_{k-1})\Bigg)\\
=&\beta_2^{t-1}\delta_1 - \beta_1\sum_{k=2}^k\beta_2^{t-k}\left(\nabla f( W_k)-\nabla f( W_{k-1})\right)+(\beta_1-\beta_2)\sum_{k=2}^t\beta_2^{t-k}\xi_{k-1}\\
&+ (1-\beta_1)\sum_{k=2}^t\beta_2^{t-k}\xi_k + \beta_1\sum_{k=2}^t\beta_2^{t-k}(\tilde{m}_{k-1}- m_{k-1}).
\end{align*}

Taking expectations, and according to \Cref{lemma: momentum error}, we have
\begin{align}
\begin{aligned}\notag
\E\left[\|\delta_t\|_{1,1}\right]\leq&\sqrt{d}\Bigg\{ \beta_2^{t-1}\E\left[\|\delta_1\|_F\right] + \beta_1\underbrace{\sum_{k=2}^t\beta_2^{t-k}\E\left[\left\|\nabla f( W_k)-\nabla f( W_{k-1})\right\|_F\right]}_{\text{\rm term (a)}}\\
&+\underbrace{\E\left[\left\| (\beta_1-\beta_2)\sum_{k=2}^t\beta_2^{t-k}\xi_{k-1}+ (1-\beta_1)\sum_{k=2}^t\beta_2^{t-k}\xi_k\right\|_F\right]}_{\text{\rm term (b)}} \\
&+\beta_1\underbrace{\sum_{k=2}^t\beta_2^{t-k}\E\left[\left\|\tilde{m}_{k-1}- m_{k-1}\right\|_F\right]}_{\text{\rm term (c)}}\Bigg\}.
\end{aligned}
\end{align}

For term (a), we have 
\begin{align}
\begin{aligned}\notag
\mbox{term (a)}\leq& L\sum_{k=2}^t\beta_2^{t-k}\E\left[\| W_k- W_{k-1}\|_F\right]\\
=& L\alpha\sum_{k=2}^t\beta_2^{t-k}\E\left[\|\sign(c^{t-1})\right]\\
\leq& 2L\alpha\sqrt{d}\sum_{k=2}^t\beta_2^{t-k}\\
\leq& \frac{2L\alpha\sqrt{d}}{1-\beta_2}.
\end{aligned}
\end{align}
For term (b), we have
\begin{equation*}
\begin{aligned}
\text{term (b)} &\leq |\beta_1-\beta_2|\E\left[\left\|\sum_{k=2}^t\beta_2^{t-k}\xi_{k-1}\right\|_F\right]+(1-\beta_1)\E\left[\left\|\sum_{k=2}^t\beta_2^{t-k}\xi_k\right\|_F\right] \\
&\leq |\beta_1-\beta_2|\sqrt{\E\left[\left\|\sum_{k=2}^t\beta_2^{t-k}\xi_{k-1}\right\|_F^2\right]}+(1-\beta_1)\sqrt{\E\left[\left\|\sum_{k=2}^t\beta_2^{t-k}\xi_k\right\|_F^2\right]}\\
&=|\beta_1-\beta_2|\sqrt{\sum_{k=2}^t\beta_2^{2(t-k)}\E\left[\left\|\xi_{k-1}\right\|_F^2\right]}+(1-\beta_1)\sqrt{\sum_{k=2}^t\beta_2^{2(t-k)}\E\left[\left\|\xi_k\right\|_F^2\right]}\\
&=|\beta_1-\beta_2|\sqrt{\sigma^2\sum_{k=2}^t\beta_2^{2(t-k)}/b}+(1-\beta_1)\sqrt{\sigma^2\sum_{k=2}^t\beta_2^{2(t-k)}/b}\\
&\leq\left(|\beta_1-\beta_2|+(1-\beta_1)\right)\cdot\frac{\sigma}{\sqrt{b(1-\beta_2^2)}}\\
&\leq\left(|\beta_1-\beta_2|+(1-\beta_1)\right)\cdot\frac{\sigma}{\sqrt{b(1-\beta_2)}}.
\end{aligned}
\end{equation*}
For term (c), according to \Cref{lemma: compression error}, we have 
\begin{align}
\begin{aligned}\notag
\mbox{term (c)}\leq& \sum_{k=2}^t\beta_2^{t-k}\E\left[\|\tilde{m}_{k-1}- m_{k-1}\|_F\right]\\
\leq& \gamma(1-\beta_2)\sum_{k=2}^t\beta_2^{t-k}\E\left[\|g_{k-1}\|_F\right]\\
\leq& \gamma(1-\beta_2)\sum_{k=2}^t\beta_2^{t-k}\E\left[\|\nabla f( W_{k-1})\|_F\right]+\gamma(1-\beta_2)\sum_{k=2}^t\beta_2^{t-k}\E\left[\|\xi_{k-1}\|_F\right]\\
\leq& \gamma(1-\beta_2)\sum_{k=2}^t\beta_2^{t-k}\E\left[\|\nabla f( W_{k-1})\|_{1,1}\right]+ \gamma\frac{\sigma}{\sqrt{b}}.
\end{aligned}
\end{align}

Plugging terms (a), (b), (c) back, we get
\begin{align}
\begin{aligned}\notag
\E\left[\|\delta_t\|_{1,1}\right]\leq&\sqrt{d}\{\beta_2^{k-1}\E\left[\|\delta_1\|_F\right] + \frac{2\beta_1 L\alpha\sqrt{d}}{1-\beta_2} + \left(|\beta_1-\beta_2|+(1-\beta_1)\right)\cdot\frac{\sigma}{\sqrt{b(1-\beta_2)}}\}\\
&+\sqrt{d}\gamma(1-\beta_2)\beta_1\sum_{k=2}^t\beta_2^{t-k}\E\left[\|\nabla f( W_{k-1})\|_{1,1}\right]+ \gamma\frac{\sigma\sqrt{d}}{\sqrt{b}}\\
\end{aligned}
\end{align}
Initializing $ m_0= g_1$, we have $\E\left[\|\delta_1\|_F\right]=\E\left[\| g_1-\nabla f( W_{1})\|_F\right]\leq\sigma/\sqrt{b}$, and
\begin{align}
\begin{aligned}\notag
\frac{1}{T}\sum_{t=1}^T\E\left[\|\delta_t\|_{1,1}\right]\leq& \frac{\sqrt{d}\sigma}{\sqrt{b} T(1-\beta_2)} + \frac{2L\alpha d}{1-\beta_2}+ \left(|\beta_1-\beta_2|+(1-\beta_1)\right)\cdot\frac{\sqrt{d}\sigma}{\sqrt{b(1-\beta_2)}}\\
&+\frac{1}{T}\sum_{t=1}^{T-1} \beta_1\sqrt{d}\gamma\E\left[\|\nabla f( W_t)\|_{1,1}\right]+ \gamma\frac{\sigma\sqrt{d}}{\sqrt{b}}
\end{aligned}
\end{align}

\end{proof}

\subsection{Proof of \Cref{thm: lion}}

\textbf{Formal statement of \Cref{thm: lion}}:     
Under Assumptions \ref{ass: smooth} and \ref{ass: variance}, applying \Cref{alg: lion} with $r\geq 2, p\geq 2$, $r+p\leq \min\{m,n\}$, $\beta_1\leq \frac{1}{4\gamma\sqrt{d}}$, we have
\begin{align}
\begin{aligned}\notag
\frac{1}{T}\sum_{t=1}^T&\E\left[\left\|\nabla f( W_t)\right\|_{1,1}\right]\leq O(1)\Bigg[\frac{f( W_{1})-f(W_T)}{\alpha T}+\frac{\sigma\sqrt{d}}{\sqrt{b}T(1-\beta_2)}\\
&+ \frac{2L\alpha \beta_1 d}{1-\beta_2}+ d L \alpha+ \gamma\frac{\sigma\sqrt{d}}{\sqrt{b}}+(|\beta_1-\beta_2|+1-\beta_1) \cdot\frac{\sqrt{d}\sigma}{\sqrt{b(1-\beta_2)}}\Bigg].
\end{aligned}
\end{align}
Denote $f( W_{1})-\inf_W f(W)=\Delta$. Set $\gamma=\left(1 + \frac{r}{p - 1} \right)^{\frac{1}{2}}=O(1)$, $\beta_2=O(1)$, $\alpha=\sqrt{\frac{\Delta}{LdT}}$. We have
    \begin{align*}
        \frac{1}{T}\sum_{t=1}^{T}\E[\|\nabla f(W_t)\|_{1,1}]\leq O(1)\left[\frac{\sqrt{dL\Delta}}{\sqrt{T}}+\frac{ \sigma \sqrt{d}}{\sqrt{b}}\right].
    \end{align*}

\begin{proof}
    
According to \Cref{ass: smooth}, we have
\begin{align}
\begin{aligned}\notag
f&( W_{t+1})-f( W_t)\\\leq&\<\nabla f( W_t), W_{t+1}- W_t\>+\frac{L}{2}\| W_{t+1}- W_t\|_F^2\\
=&-\alpha\<\nabla f( W_t),\sign( c_t)\>+\frac{L\alpha^2}{2}\|\sign( c_t)\|_F^2\\
=&-\alpha\<\nabla f( W_t),\sign(\nabla f( W_t))\>-\alpha\<\nabla f( W_t),\sign( c_t)-\sign(\nabla f( W_t))\>+\frac{L\alpha^2}{2}\|\sign( c_t)\|_F^2\\
\leq&-\alpha\left\|\nabla f( W_t)\right\|_{1,1}+2\alpha\|\delta_t\|_{1,1}+\frac{dL\alpha^2}{2}
\end{aligned}
\end{align}

Taking expectations, and according to \Cref{lemma: momentum error}, we have
\begin{align}
\begin{aligned}\notag
\E&\left[f( W_{T})-f( W_{0})\right]\\
\leq&-\alpha\sum_{t=0}^{T-1}\E[\|\nabla f( W_t)\|_{1,1}]+2\alpha\sum_{k=1}^T\E\left[\|\delta_t\|_{1,1}\right]+TdL\alpha^2/2\\
\leq&-\alpha\sum_{t=1}^T\E[\|\nabla f( W_t)\|_{1,1}]+2\alpha T\Bigg\{\frac{\sqrt{d}\sigma}{\sqrt{b} T(1-\beta_2)} + \frac{2L\alpha \beta_1 d}{1-\beta_2} + (|\beta_1-\beta_2|+1-\beta_1) \cdot\frac{\sqrt{d}\sigma}{\sqrt{b(1-\beta_2)}}\\
&+\frac{1}{T}\sum_{t=1}^{T-1} \beta_1\sqrt{d}\gamma\E\left[\|\nabla f( W_t)\|_{1,1}\right]+ \gamma\frac{\sigma\sqrt{d}}{\sqrt{b}}\Bigg\}+TdL\alpha^2/2.
\end{aligned}
\end{align}

When $\beta_1\leq \frac{1}{4\gamma\sqrt{d}}$, we have
\begin{align*}
    1-2\beta_1\sqrt{d}\gamma\geq \frac{1}{2},
\end{align*}
and
\begin{align}
\begin{aligned}\notag
\frac{1}{T}\sum_{t=0}^{T-1}&\E\left[\left\|\nabla f( W_t)\right\|_{1,1}\right]\leq O(1)\Bigg[\frac{f( W_{1})-f(W_T)}{\alpha T}+\frac{\sigma\sqrt{d}}{\sqrt{b}T(1-\beta_2)}\\
&+ \frac{2L\alpha \beta_1 d}{1-\beta_2}+ d L \alpha+ \gamma\frac{\sigma\sqrt{d}}{\sqrt{b}}+(|\beta_1-\beta_2|+1-\beta_1) \cdot\frac{\sqrt{d}\sigma}{\sqrt{b(1-\beta_2)}}\Bigg],
\end{aligned}
\end{align}
Denote $f( W_{1})-\inf_W f(W)=\Delta$. Set $\gamma=\left(1 + \frac{r}{p - 1} \right)^{\frac{1}{2}}=O(1)$, $\beta_2=O(1)$, $\alpha=\sqrt{\frac{\Delta}{LdT}}$. We have
    \begin{align*}
        \frac{1}{T}\sum_{t=1}^{T}\E[\|\nabla f(W_t)\|_{1,1}]\leq O(1)\left[\frac{\sqrt{dL\Delta}}{\sqrt{T}}+\frac{ \sigma \sqrt{d}}{\sqrt{b}}\right].
    \end{align*}
Thus, when $\sigma=0$ in deterministic case, we can find an $\epsilon$-entrywise $\ell_1$-norm stationary point of $f$ with a complexity of $O(\Delta L d \epsilon^{-2})$; when $\sigma\neq 0$ in stochastic case, with a large batch size $b = \Theta(d\sigma^2\epsilon^{-2})$, we can find an $\epsilon$-entrywise $\ell_1$-norm stationary point of $f$ with a sample complexity of $O(\Delta L d^2 \sigma^2 \epsilon^{-4})$, matching the same sample complexity as the original Lion \citep{dong2024convergence}.

\end{proof}

\section{Additional Experimental Results}\label{app: exp}
\subsection{Low-rank Structures of the Gradients and Momenta}\label{app: low rank}

\begin{figure}[htbp]
    \centering

    \begin{subfigure}[b]{0.4\textwidth}
        \centering
        \includegraphics[width=\textwidth]{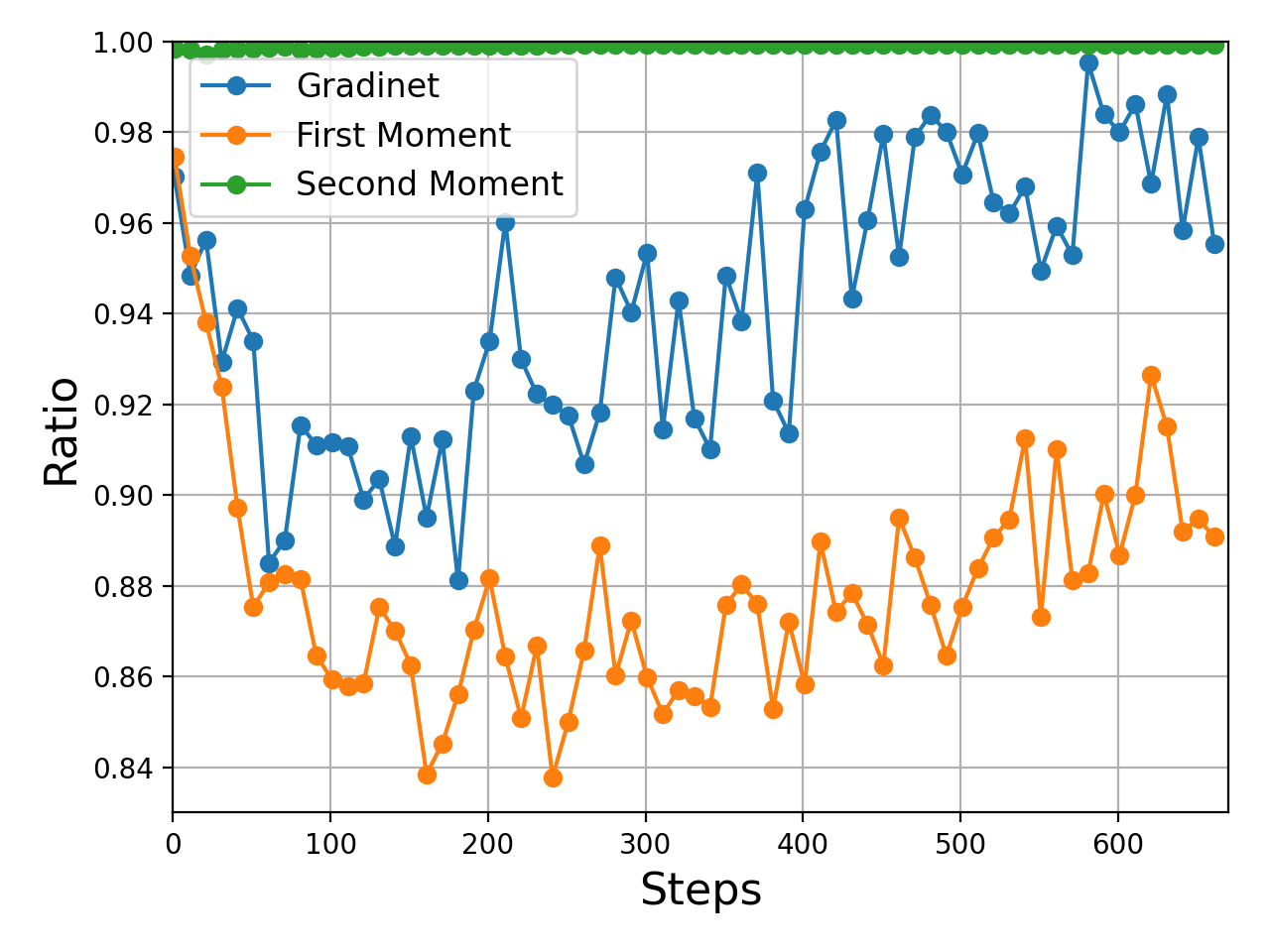}
        \caption{CoLA}
    \end{subfigure}
    \begin{subfigure}[b]{0.4\textwidth}
        \centering
        \includegraphics[width=\textwidth]{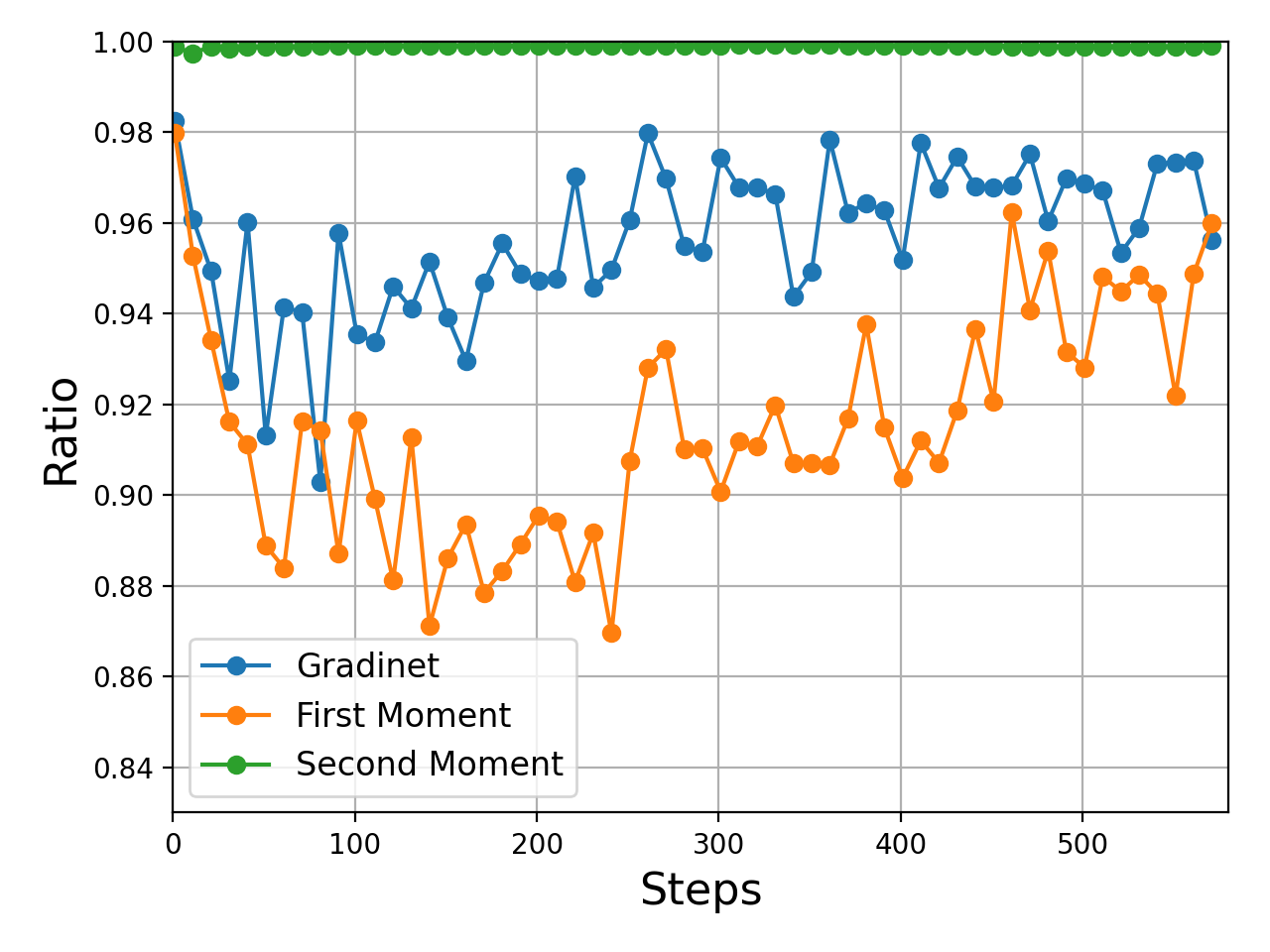}
        \caption{MRPC}
    \end{subfigure}


    \begin{subfigure}[b]{0.4\textwidth}
        \centering
        \includegraphics[width=\textwidth]{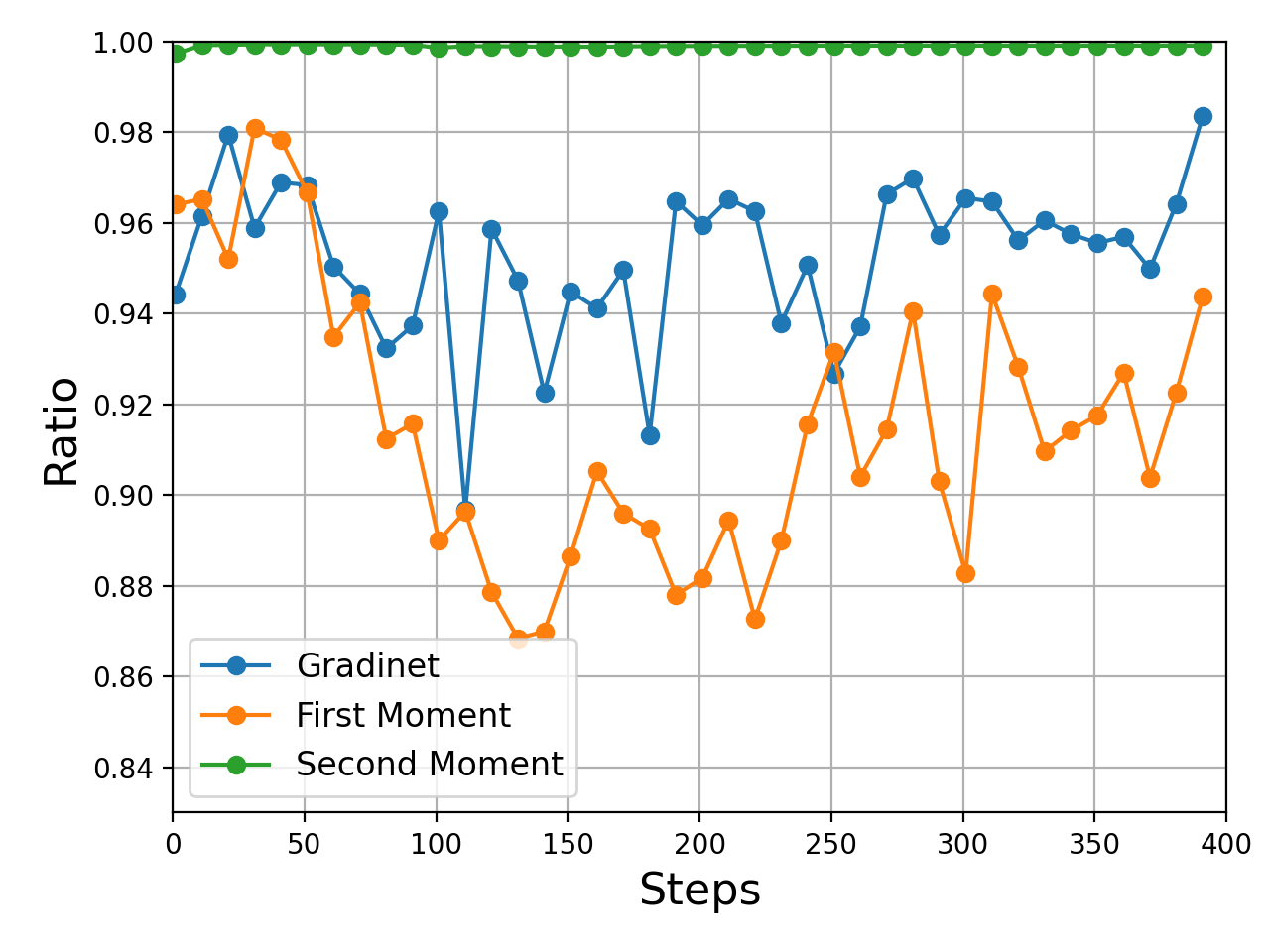}
        \caption{RTE}
    \end{subfigure}
    \begin{subfigure}[b]{0.4\textwidth}
        \centering
        \includegraphics[width=\textwidth]{fig/moment_stsb.png}
        \caption{STSB}
    \end{subfigure}

    \caption{Ratio of top-8 singular values to total singular values for gradient, first moment, and second moment during AdamW finetuning of RoBERTa-base on the CoLA, MRPC, RTE, STSB datasets.}
  \label{fig: moment}
\end{figure}

We conduct experiments examining the concentration of singular values in gradients and momenta during AdamW finetuning of RoBERTa-base on the CoLA, MRPC, RTE and STSB datasets. We set the batch size as 128, epochs as 20, learning rate as 1e-4 for all these four datasets. We use AdamW finetune the matrix parameters of query, key, value, output weights in attention layers and the intermediate and output weights in feed-forward layers. We conducted these experiments on NVIDIA RTX 6000 Ada GPUs. The average ratios of top-8 singular values to total singular values for gradient, first moment, and second moment of all these matrix parameters are reported in \Cref{fig: moment}. We can note that, in general, the momenta on all these datasets have highly concentrated singular values and exhibit low-rank structures, which is aligned with the intuition of our method.

\subsection{Memory Footprint with Per-layer Weight Updates}\label{app: memory}

As mentioned in \Cref{section: mlorc}, we can avoid storing full gradients in MLorc by using per-layer weight updates \citep{lv2024full}. Here we compare the memory consumption of LoRA and MLorc with per-layer weight updates with a batch size of 4.

\begin{table}[h]
\centering
\caption{Memory footprint of MLorc with per-layer weight updates and LoRA with a batch size of 4. Apart from batch size, other hyperparameters and settings are same as previous experiments.}
\label{tab:layer-wise}
\begin{tabular}{|c|c|}
\hline
MLorc (per-layer update) & LoRA\\
\hline
16.8GB & 17.7GB \\
\hline
\end{tabular}
\end{table}

Table \ref{tab:layer-wise} suggests that MLorc can even be more memory-efficient than LoRA with per-layer weight updates, which supports our claim in \Cref{section: mlorc}.

\subsection{Ablation study}

In MLorc-AdamW, we compress both the first moment and the second moment. In principle, we could instead compress only the first moment while retaining the original AdamW update rule for the second moment (denoted as MLorc\_m in \Cref{tab: ablation study}), or compress only the second moment while retaining the original AdamW update rule for the first moment (denoted as MLorc\_v in \Cref{tab: ablation study}). We conducted additional ablation experiments comparing MLorc\_m, MLorc\_v, and MLorc-AdamW . We adopted the same experimental settings as in the main paper, and the results are shown in \Cref{tab: ablation study}. We observe that the performance differences among MLorc\_m, MLorc\_v, and MLorc-AdamW are relatively small. MLorc achieves better results than both MLorc\_m and MLorc\_v on four out of the eight datasets. However, MLorc requires significantly less memory than MLorc\_m and MLorc\_v. For example, in the MRPC experiment, full fine-tuning consumes 2498 MB, while MLorc\_m and MLorc\_v require 2027 MB and 2026 MB, respectively. In contrast, MLorc-AdamW only uses 1703 MB. Therefore, we believe that MLorc-AdamW is more practical when memory resources are limited.

\begin{table}[h]
\centering
\caption{Ablation study on compressing different momenta in MLorc-AdamW}
\label{tab: ablation study}
\begin{tabular}{lccccccccc}
\toprule
Method & CoLA & MNLI & MRPC & QNLI & QQP & RTE & SST2 & STSB & Avg \\
\midrule
Full     & 62.33 & 87.62 & 91.11 & 92.92 & 90.26 & 75.81 & 95.18 & 90.50 & 85.72 \\ \midrule
MLorc-AdamW   & \textbf{62.07} & 87.53 & 90.77 & \textbf{93.19} & 88.99 & 77.98 & \textbf{95.18} & \textbf{90.59} & \textbf{85.79} \\
MLorc\_m (Only compress m)     & 61.07 & 87.51 & \textbf{91.34} & 93.19 & 88.99 & \textbf{78.70} & 95.18 & 90.59 & 85.69 \\
MLorc\_v (Only compress v)  & 59.14 & \textbf{87.67} & 91.31 & 92.38 & \textbf{90.32} & 75.81 & 95.07 & 90.58 & 85.29 \\
\bottomrule
\end{tabular}
\end{table}

\section{Detailed Experimental Settings}\label{app: setting}
\subsection{Fine-Tuning on MetaMathQA and CodeFeedback}\label{app: nlg}

The pre-trained LLaMA2-7B model is from Hugging
Face\footnote{https://huggingface.co/meta-llama/Llama-2-7b-chat-hf}. We have reported our batch size, epoch and other settings in \Cref{section: exp nlg}. Also, for all methods, on GSM8K dataset, the max sequence length is 512; on CodeFeedback dataset, the max sequence length is 1024; the weight decay is 0.  For GaLore, the subspace update frequency $T$ is set to 300 on both datasets. Oversampling parameter \(p\) is set as \(0\) for MLorc on both datasets. The temperature for evaluation is 0.8 for math task and 0.1 for coding task, since a high temperature would lead to highly unstable performance on HumanEval dataset. For each method and each dataset, the learning rate is individually tuned. We present specific learning rates in Table \ref{tab: Hyperparameter settings of NLG on LLaMA2-7B}.

\begin{table}[h]
\centering
\caption{Learning rates of different methods when fine-tuning LLaMA2-7B on MetaMathQA and CodeFeedback dataset.}
\begin{tabular}{lcccccccc}
\toprule
& MLorc-AdamW & Full (AdamW) & LoRA (AdamW) & GaLore & LDAdamW \\
\midrule
MetaMathQA     & 7E-05 & 4E-05 & 1E-03 & 3E-03 & 3E-04\\
CodeFeedback   & 7E-05 & 9E-05 & 3E-04 & 2E-03 & 3E-04\\

\bottomrule
\end{tabular}

\vspace{0.3cm}

\begin{tabular}{lcccccccc}
\toprule
& MLorc-Lion & Full (Lion) & LoRA (Lion)\\
\midrule
MetaMathQA     &1E-05 & 3E-05 & 1E-04 \\
CodeFeedback   &7E-06 & 2E-05 & 2E-04 \\

\bottomrule
\end{tabular}

\label{tab: Hyperparameter settings of NLG on LLaMA2-7B}
\end{table}

\subsection{Fine-Tuning on GLUE}\label{app: nlu}

The pre-trained RoBERTa-Base model is from Hugging
Face\footnote{https://huggingface.co/docs/transformers/model\_doc/roberta}. We use the same batch size, number of epochs, and maximum sequence length across all methods, including Full fine-tuning, MLorc, LoRA, and GaLore. For each method and each dataset, the learning rate is individually tuned. The LoRA scaling factor $\alpha$ is set to 16 for all tasks. For GaLore, the subspace update frequency $T$ is set to 50 for CoLA, MRPC, RTE, and STSB, and 100 for SST2, QNLI, MNLI, and QQP. We set the oversampling parameter \(p\)  as \(0\) for MLorc for all datasets. Experiments for CoLA, MRPC, and RTE are conducted on NVIDIA H100 GPUs; STSB, SST2, and QNLI are conducted on NVIDIA RTX A6000 GPUs; MNLI on NVIDIA RTX 6000 Ada GPUs; and QQP on NVIDIA GeForce RTX 3090 GPUs. Detailed hyperparameter settings are provided in \Cref{tab: Hyperparameter settings of GLUE}.

\begin{table}[h]
\centering
\caption{Hyperparameter settings for the GLUE tasks. "LR" denotes the learning rate.}
\begin{tabular}{lcccccccc}
\toprule
 & CoLA & MNLI & MRPC & QNLI & QQP & RTE & SST2 & STSB \\
\midrule
Batch Size     & 128 & 128 & 128 & 128 & 128 & 128 & 128 & 128 \\
Epochs         & 10  & 5   & 20  & 5   & 5   & 10  & 10  & 20  \\
Max Seq. Len.  & 64  & 256 & 64  & 256 & 256 & 256 & 128 & 128 \\
LR of Full     & 3E-05 & 3E-05 & 7E-05 & 1E-05 & 7E-05 & 3E-05 & 7E-06 & 1E-04 \\
LR of MLorc    & 3E-05 & 1E-04 & 7E-05 & 5E-05 & 7E-05 & 5E-05 & 5E-05 & 7E-05 \\
LR of LoRA     & 1E-03 & 3E-04 & 1E-03 & 5E-04 & 5E-04 & 7E-04 & 3E-04 & 5E-04 \\
LR of GaLore   & 3E-04 & 3E-04 & 5E-04 & 3E-04 & 3E-04 & 5E-04 & 3E-04 & 3E-04 \\
LR of LDAdamW  & 7E-05 & 7E-05 & 3E-04 & 7E-05 & 1E-04 & 5E-04 & 7E-05 & 1E-04 \\
\bottomrule
\end{tabular}
\label{tab: Hyperparameter settings of GLUE}
\end{table}

\subsection{License information}\label{app: lincese} LLaMA 2-7B \citep{touvron2023llama} is licensed under the LLaMA 2 Community License Agreement. CodeFeedback \citep{zheng2024opencodeinterpreter} is  licensed under Apache License 2.0. RoBERTa \citep{liu2019roberta}, MetaMathQA \citep{yu2023metamath}, GSM8K \citep{cobbe2021training}, HumanEval \citep{chen2021evaluating} are licensed under the MIT License.

\end{document}